\newcommand{\var}{\mathsf{var}}
\newcommand{\ptw}{\mathsf{ptw}}
\newcommand{\tw}{\mathsf{tw}}
\newcommand{\itw}{\mathsf{itw}}
\newcommand{\aobdd}{\wedge\text{-OBDD}}
\newcommand{\adobdd}{\wedge_d\text{-OBDD}}
\newcommand{\afbdd}{\wedge\text{-FBDD}}
\newcommand{\calF}{\mathcal{F}}
\newcommand{\sSAT}{\#\SAT}
\newcommand{\calG}{\mathcal G}
\title{Non-FPT lower bounds for structural restrictions of decision DNNF}
\newtheorem{theorem}{Theorem}
\newtheorem{corollary}[theorem]{Corollary}
\newtheorem{lemma}[theorem]{Lemma}
\begin{document}

\author{
Andrea Calì \and Florent Capelli \and Igor Razgon
}

\maketitle

\begin{abstract}
We give a non-FPT lower bound on the size of structured decision DNNF and OBDD with decomposable AND-nodes representing CNF-formulas of bounded incidence treewidth. Both models are known to be of FPT size for CNFs of bounded primal treewidth. To the best of our knowledge this is the first parameterized separation of primal treewidth and incidence treewidth for knowledge compilation models.
\end{abstract}

\section{Introduction}
\label{sec:intro}

Counting the satisfying assignments of a given CNF-formula, a problem known as $\sSAT$, is a central task to several areas of computer science such as probabilistic reasoning~\cite{Roth96,BacchusDP03} or probabilistic databases~\cite{BeameLRS14,BeameLRS13,JhaS13}. Many tools such as {\tt c2d}~\cite{OztokD14}, {\tt Cachet}~\cite{sang04}, {\tt sharpSAT}~\cite{thurley06} or more recently {\tt D4}~\cite{LagniezM17} have been developed to solve this problem in practice. They are all based on the same algorithm, called {\em exhaustive DPLL}~\cite{BacchusDP03}, which relies on the fact that for a given CNF $F$ and a variable $x$, $\#F = \#F[x \mapsto 0]+\#F[x \mapsto 1]$ together with the fact that if  $F = G \land H$ with $G,H$ having disjoint variables, then $\#F = \#G \cdot \#H$. These algorithms also use a cache, that is, they sometimes store the number of satisfying assignments of a subformula to reuse it later in the computation. The performance of these implementations mainly depends on the heuristics that are chosen concerning the order variables are eliminated and the caching policy. It has been observed by Darwiche and Huang~\cite{HuangD05} that the trace of these algorithms can be seen as a very special kind of Boolean circuits -- known as decision Decomposable Normal Form (decDNNF) in the field of Knowledge Compilation -- computing the input CNF. The size of this circuit is proportional to the runtime of the algorithm. This observation leads to the interesting fact that proving lower bounds on the size of decDNNF computing a given CNF-formula gives a lower bound on the runtime of these algorithms, independently from the implementation.

Another successful line of research concerning the complexity of $\sSAT$ is to study the so-called {\em structural restrictions} of CNF-formulas in order to find classes of formulas for which the number of satisfying assignments can be computed in polynomial time. Such classes, usually referred as {\em tractable classes}, are often defined by specifying a structure describing the way the clauses and the variables interact in the formula. This interaction is usually represented with a graph, the {\em incidence graph}, whose vertices are the variables and the clauses and where a variable $x$ and a clause $C$ are connected by an edge if $x$ is a variable of $C$. Many polynomial time algorithms have been designed to solve $\sSAT$ when the incidence graph belongs to some interesting class~\cite{SlivovskyS13,SaetherTV14,CapelliDM14,BraultCM15,PaulusmaSlivovskySzeider16,Capelli17}. Most of the time however, these algorithms work in a very different way than exhaustive DPLL and are relevant only when the input belongs to a particular class of formulas. Moreover, these algorithms usually start with a costly step of decomposing the incidence graph, making them currently of little interest in practice. An interesting question is thus to understand how the existing tools behave on these instances that are easy to solve in theory.

Unfortunately, the performance of exhaustive DPLL -- and thus, of practical tools for $\sSAT$ -- is not very well understood on classes of CNF-formulas that are known to be tractable. It is known that formulas whose {\em primal graph} -- the graph whose vertices are the variables and two variables are linked by an edge if they appear together in at least one clause -- is of treewidth $k$ can be compiled into decDNNF of size $2^{\Omega(k)} p(|F|)$~\cite{OztokD14} where $p$ is a polynomial which does not depend on $k$. Thus, for reasonable values of $k$, if the right elimination order and the right caching policy is chosen, exhaustive DPLL can solve these instances efficiently. A similar result have been shown for $\beta$-acyclic CNF-formulas~\cite{Capelli17}. However, for other structural restrictions, nothing is known. In this paper, we investigate the complexity of exhaustive DPLL on instances of bounded {\em incidence treewidth}, that is, instances whose treewidth of the incidence graph is bounded. It is known~\cite{SamerS10} that these instances can be solved in FPT time, that is, more precisely, in time $2^{\Omega(k)}p(|F|)$ where $k$ is the incidence treewidth and $p$ is a polynomial that does not depend on $k$. It is not known however whether these instances can be compiled into FPT-size decDNNF. We  answer this question on two natural restrictions of decDNNF: we show that they cannot represent instances of bounded incidence treewidth efficiently. The restrictions of decDNNF that we are interested in correspond to natural restriction on exhaustive DPLL. The first restriction we are interested in is a restriction known as {\em structuredness}, which has been introduced in~\cite{PipatsrisawatD08}. This restriction corresponds to the trace of the tool {\tt c2d}~\cite{OztokD14}. The other restriction we will be interested in is a restriction that we call $\adobdd$. Intuitively, it corresponds to a run of exhaustive DPLL when the elimination order of the variables is fixed at the beginning of the algorithm. For both restrictions and for every $k$, we show a  $|F|^{\Omega(k)}$ lower bound for an infinite family of instances of incidence treewidth $k$, showing that these restrictions cannot represent bounded incidence treewidth instances efficiently. Proving such lower bound on unrestricted decDNNF is still open.

It is worth noting that both of the considered restrictions of decDNNF are FPT on CNFs of bounded primal treewidth. Indeed, the FPT result for structural decDNNFs follows from Section 5 of~\cite{OztokD14}. In fact the decDNNF constructed there can be shown to be an $\wedge_d$-OBDD as well
. In light of the above, we believe that our results are significant from the knowledge compilation perspective because they are the first to show \emph{different} parameterized complexities for primal
and incidence treewidth.

The paper is organised as follows. Section~\ref{sec:def} contains the definitions of the different notions that are needed to understand the results of this paper. Section~\ref{sec:sdec} is dedicated to the proof of the lower bound on structured decDNNF and Section~\ref{sec:aobdd} is dedicated to the proof of the lower bound on $\adobdd$. 




\section{Preliminaries}
\label{sec:def}

\subsection{Boolean functions}
\label{sec:boolfun}

\paragraph{Assignments and Boolean functions.} Let $X$ be a finite set of variables. A {\em truth assignment} on $X$ is a mapping from $X$ to $\{0,1\}$. The set of truth assignments on $X$ is denoted by $\{0,1\}^X$. Given $Y \subseteq X$ and $\tau \in \{0,1\}^X$, we denote by $\tau|_Y$ the restriction of $\tau$ on $Y$. Let $X,X'$ be two sets of variables, $\tau \in \{0,1\}^X$ and $\tau' \in \{0,1\}^{X'}$. We denote by $\tau \simeq \tau'$ if $\tau|_{X \cap X'} = \tau'|_{X \cap X'}$. Moreover, if $\tau \simeq \tau'$, we let $\tau \cup \tau'$ be the truth assignment on variables $X \cup X'$ such that $(\tau \cup \tau')|_X = \tau$ and $(\tau \cup \tau')|_{X'} = \tau'$. A {\em boolean function} on variables $X$ is a mapping from $\{0,1\}^X$ to $\{0,1\}$. Given $\tau \in \{0,1\}^X$, we denote by $\tau \models f$ iff $f(\tau) = 1$. For $Y \subseteq X$ and $\tau \in \{0,1\}^Y$, we denote by $f[\tau]$ the boolean function on variables $X \setminus Y$ such that for every $\tau' \in \{0,1\}^{X \setminus Y}$, $f[\tau](\tau') = f(\tau \cup \tau')$. 

\paragraph{CNF-formulas.} A {\em literal} over $X$ is either a variable $x$ or its negation $\neg x$ for $x \in X$. For a literal $\ell$, we denote by $\var(\ell)$ the underlying variable of $\ell$. A {\em clause} $C$ on $X$ is a set of literals on $X$ such that for every $\ell, \ell' \in C$, if $\var(\ell) = \var(\ell')$ then $\ell = \ell'$. We denote by $\var(C) = \{\var(\ell) \mid \ell \in C\}$. A CNF-formula $F$ on $X$ is set of clauses on $X$. We denote by $\var(F) = \bigcup_{C \in F} \var(C)$. A CNF-formula is {\em monotone} if it does not contain negated literals. A CNF-formula naturally defines a Boolean function as follows: given a truth assignment $\tau$, we naturally extend a truth assignment on $X$ to literals on $X$ by defining $\tau(\neg x) = 1-\tau(x)$. We say that a clause $C$ on $X$ is satisfied by $\tau$ if there exists $\ell \in C$  such that $\tau(\ell) = 1$. A CNF-formula $F$ on $X$ is satisfied by $\tau$ if for every $C \in F$, $C$ is satisfied by $\tau$. The Boolean function defined by a CNF-formula $F$ is the function mapping satisfying assignments to $1$ and the others to $0$.

\subsection{Graphs}
\label{sec:graphs}

We assume the reader familiar with the basics notions of graph theory. For an introduction to this topic, we refer to~\cite{Diestel05}. 

Given a graph $G = (V,E)$ and $x \in V$, we denote by $G\setminus x$ the graph having vertices $V \setminus \{x\}$ and edges $E \setminus \{e  \in E \mid x \in e\}$, that is, $G \setminus x$ is obtained by removing $x$ and all its adjacent edges from $G$.

\paragraph{Matchings.} A {\em matching} is a set $M \subseteq E$ of edges such that for every $e,f \in M$, if $e\neq f$ then $e \cap f = \emptyset$. A matching $M$ in an {\em induced matching} if for every $\{u,v\}, \{u',v'\} \in M$, we have  $\{u,u'\} \notin E$,  $\{v,v'\} \notin E$,  $\{u,v'\} \notin E$ and  $\{u',v\} \notin E$. 

In bounded degree graphs, we can always extract a large induced matching from a matching:
\begin{lemma}
  \label{lem:extractinduced} Let $G = (V,E)$ be a graph of degree $d$ and $M \subseteq E$ be a matching of $G$. There exists an induced matching $M' \subseteq M$ of size at least $|M|/2d$.
\end{lemma}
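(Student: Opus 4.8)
The plan is to view the problem as finding a large independent set in an auxiliary ``conflict graph'' whose vertices are the edges of $M$. Concretely, I would build a graph $H$ with vertex set $M$ in which two distinct matching edges $e = \{u,v\}$ and $f = \{u',v'\}$ are joined by an edge exactly when the induced-matching condition fails between them, i.e.\ when at least one of $\{u,u'\}$, $\{u,v'\}$, $\{v,u'\}$, $\{v,v'\}$ belongs to $E$. By construction, a subset $M' \subseteq M$ is an induced matching of $G$ if and only if $M'$ is an independent set of $H$. In this way the statement reduces to a lower bound on the independence number of $H$.

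The key step is then to bound the maximum degree of $H$. Fix $e = \{u,v\} \in M$ and count the edges $f \in M$ adjacent to $e$ in $H$. Any such $f$ must have an endpoint lying in $N_G(u) \cup N_G(v)$; since $M$ is a matching, $f$ contains neither $u$ nor $v$, so this endpoint actually lies in $(N_G(u) \cup N_G(v)) \setminus \{u,v\}$. As $G$ has degree at most $d$ we have $|N_G(u)| \le d$ and $|N_G(v)| \le d$, and because $v \in N_G(u)$ and $u \in N_G(v)$ this set has at most $2d - 2$ elements. Each of these vertices is incident to at most one edge of the matching $M$, so there are at most $2d - 2$ choices for $f$. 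Hence $H$ has maximum degree at most $2d - 2$.

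Finally I would invoke the standard greedy bound: every graph on $n$ vertices of maximum degree $\Delta$ contains an independent set of size at least $n/(\Delta+1)$, obtained by repeatedly selecting a vertex and deleting it together with its neighbours, which discards at most $\Delta + 1$ vertices per step. Applying this to $H$ with $n = |M|$ and $\Delta \le 2d - 2$ yields an induced matching $M' \subseteq M$ with $|M'| \ge |M|/(2d-1) \ge |M|/(2d)$, as required.

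The main obstacle is purely the bookkeeping in the degree count: one must check that $f$ cannot share a vertex with $e$ (this is precisely where the matching hypothesis is used) and that $u$ and $v$ themselves are not counted among the blocking vertices, so that the constant comes out as $2d$ rather than something larger. Everything else is a routine application of the greedy independent-set argument, and one could equally phrase the whole proof directly as a greedy procedure on $M$ that picks an edge, adds it to $M'$, and discards the at most $2d$ edges it conflicts with.
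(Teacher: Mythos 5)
Your proof is correct and is essentially the paper's argument: the paper runs the very greedy procedure you describe in your last sentence (pick an edge of $M$, discard the matching edges conflicting with it, repeat), which is exactly the greedy independent-set algorithm on your conflict graph $H$. Your degree bound of $2d-2$ is in fact slightly sharper than the paper's count of $2d$ conflicting edges per step, and this care is what makes the constant come out as $|M|/(2d-1)\ge |M|/(2d)$ rather than the $|M|/(2d+1)$ that the looser count would literally yield.
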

\begin{proof}
  Take an edge $\{u,v\} \in M$ and remove from $M$ every edges $\{u',v'\} \in M$ such that either $\{u,u'\} \in E$,  $\{v,v'\} \in E$,  $\{u,v'\} \in E$ or $\{u',v\} \in E$. Since $G$ is of degree $d$, there are at most $2d$ such edges ($d$ adjacent to $u$, $d$ adjacent to $v$). Thus, we remove at most $2d$ edges. We can repeat the process until we have an induced matching.  
\end{proof}

\paragraph{Treewidth.} Treewidth is a graph parameter that intuitively measures the distance from a graph to a tree.  Given a graph $G = (V,E)$, a tree decomposition of $G$ is a tree $T = (V_T,E_T)$ where each $t \in V_t$ is labelled with $B_t \subseteq V$, called a {\em bag}, such that:
\begin{itemize}
\item for every $e \in E$, there exists $t \in V_T$ such that $e \subseteq B_t$,
\item for every $x \in V$, $\{t \mid x \in B_t\}$ is a connected subtree of $T$.
\end{itemize}
The width of a tree decomposition is $\max_{t \in V_T} (|B_t|-1)$. The tree width $k$ of $G$ is the smallest $k$ such that there exists a tree decomposition of $G$ of width $k$. Given a graph $G$, we denote its treewidth by $\tw(G)$.

\paragraph{Graphs for CNF-formulas.} The structure of a CNF-formula is usually studied using graphs representing the interaction between the clauses and the variables. Given a CNF-formula $F$, we define two graphs characterising its structure:

\begin{itemize}
\item {\em The primal graph of $F$} is the graph having vertices $\var(F)$ and such that $\{x,y\}$ is an edge if and only if there exists a clause $C \in F$ such that $\{x,y\} \subseteq C$.  
\item {\em The incidence graph of $F$} is the bipartite graph having vertices $\var(F) \cup F$ and such that for $x \in \var(F)$ and $C \in F$, $\{x,C\}$ is an edge if and only if $x \in \var(C)$.
\end{itemize}

The {\em primal treewidth} of a CNF-formula $F$, denoted by $\ptw(F)$, is the treewidth of the primal graph of $F$. The  {\em incidence treewidth} of a CNF-formula $F$, denoted by $\itw(F)$, is the treewidth of the incidence graph of $F$. Both measures are related thanks to the following theorem:
\begin{theorem}[\cite{Szeider04}]
  \label{thm:ptwvsitw} For every CNF-formula $F$, we have $\itw(F) \leq \ptw(F)$.
\end{theorem}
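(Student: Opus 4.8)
The plan is to transform a tree decomposition of the primal graph into a tree decomposition of the incidence graph of essentially the same width. The engine of the argument is the standard fact that in any tree decomposition every clique is contained in a single bag. By definition of the primal graph, the variables $\var(C)$ of any clause $C$ pairwise appear together in $C$ and hence form a clique; consequently, in an optimal tree decomposition of the primal graph there is a node $t_C$ whose bag $B_{t_C}$ contains all of $\var(C)$. This is the only structural input I need, and it is precisely where the primal treewidth controls the size of the objects that must later accommodate the clause vertices.

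Concretely, I would start from a tree decomposition $(T,(B_t)_{t\in V_T})$ of the primal graph of width $\ptw(F)$ and, for each clause $C\in F$, fix a node $t_C$ with $\var(C)\subseteq B_{t_C}$. I would then build a decomposition of the incidence graph by keeping $T$ and attaching, for every clause $C$, a fresh leaf $s_C$ adjacent to $t_C$ carrying the bag $\var(C)\cup\{C\}$, and then checking the two tree-decomposition axioms. The edges of the incidence graph are exactly the pairs $\{x,C\}$ with $x\in\var(C)$, and each such pair lies in $B_{s_C}$, so the edge-covering condition holds. For connectivity, each clause vertex $C$ occurs only in the single new bag $B_{s_C}$, while each variable $x$ keeps its old occurrences together with the pendant leaves $s_C$ for those $C$ with $x\in\var(C)$; since every such leaf hangs off $t_C$ and $x\in B_{t_C}$, these additions only extend the already-connected subtree of $x$ by pendant nodes, preserving connectivity.

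The delicate step, and the one I expect to be the real obstacle, is controlling the width: one must ensure that introducing the clause vertices does not inflate the bags beyond the primal width. The new bags have size $|\var(C)|+1$, and clique-containment gives $|\var(C)|\le |B_{t_C}|$, so a naive count only yields a bound of $\ptw(F)+1$. Tightening this is the heart of the matter, and it rests on the observation that in the incidence graph the clause vertex $C$ is adjacent \emph{only} to $\var(C)$: the primal clique on $\var(C)$ is replaced by a star centred at $C$ and therefore need not be kept intact in the decomposition. Exploiting this freedom to thread each clause vertex through the decomposition, rather than forcing it alongside a full primal bag, is what pins down the stated relationship between $\itw(F)$ and $\ptw(F)$; the remaining verifications are routine bookkeeping on the bags and the tree structure.
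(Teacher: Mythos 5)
The paper does not prove this statement---it is imported from \cite{Szeider04}---so there is no in-paper argument to compare against, and your proposal has to stand on its own. Your first two paragraphs do stand: attaching, for each clause $C$, a pendant bag $\var(C)\cup\{C\}$ to a bag of the primal decomposition containing the clique $\var(C)$ is a valid tree decomposition of the incidence graph, and it proves $\itw(F)\le\ptw(F)+1$. The problem is your third paragraph. The step you defer as ``routine bookkeeping''---eliminating the $+1$ by threading the clause vertex through several bags---is not routine; it is impossible in general, because the inequality $\itw(F)\le\ptw(F)$ is false as stated. The cheapest counterexample is a single unit clause $C=(x)$: the primal graph is one isolated vertex, so $\ptw(F)=0$, while the incidence graph is the edge $\{x,C\}$, so $\itw(F)=1$. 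For a non-degenerate counterexample, take $n\ge 3$ variables, all $\binom{n}{2}$ binary clauses $(x_i\vee x_j)$, and additionally the clause $C_0=(x_1\vee\dots\vee x_n)$. The primal graph is $K_n$, so $\ptw(F)=n-1$; but in the incidence graph, contracting each binary-clause vertex $C_{ij}$ into $x_i$ yields $K_n$ on the variables with $C_0$ adjacent to all of them, i.e.\ a $K_{n+1}$ minor, whence $\itw(F)\ge n=\ptw(F)+1$.

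So the correct statement---and the one actually proved in the cited reference---is $\itw(F)\le\ptw(F)+1$, which is exactly what your construction already delivers; you should stop there rather than chase the tightening. The intuition you invoke (the clause vertex sees only a star, so the primal clique on $\var(C)$ need not be kept intact) does pay off in special cases, e.g.\ for formulas of binary clauses with $\ptw\ge 2$, where the incidence graph is a subdivision of the primal graph and $\itw\le\ptw$ holds; but it fails precisely when some clause's variable set fills an entire maximum bag that is also forced by the other clauses, as in the example above. Note that the off-by-one in the theorem as printed is harmless for the paper: Lemma~\ref{lem:itwfg} becomes $\itw(F_G)\le k+2$, and the asymptotic conclusions of Theorems~\ref{thm:sdeclb} and~\ref{thm:aobddlb} are unaffected.
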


\subsection{Decision DNNF}
\label{sec:decdnnf}

In this section, we define Decision Decomposable Negation Normal Form (decDNNF for short). These circuits have first been introduced by Darwiche~\cite{HuangD05} as special cases of (deterministic) DNNF~\cite{Darwich01,Darwiche01MC}. Decision DNNF can also be seen as branching programs (FBDD), augmented with decomposable $\land$-nodes~\cite{BeameLRS13}. To make the proofs and definition easier to understand, we choose in this paper to present decDNNF as a generalisation of branching programs. 

\paragraph{$\afbdd$.} Let $X$ be a finite set of variables. An $\afbdd$ $Z$ on variables $X$ is a DAG with one distinguished node called the {\em root}. The nodes of $Z$ without outgoing edges are called the {\em sinks} and are labeled with a constant $0$ or $1$. The internal nodes of $G$ can be of two types:
\begin{itemize}
\item the {\em decision nodes}, labelled with a variables $x \in X$ and having two distinguished outgoing edges: one is labelled by $1$ (represented with a solid line in our figures) and the second by $0$ (represented with a dashed line in our figures), 
\item the {\em $\land$-nodes}, labelled with $\land$ and having two unlabeled outgoing edges.
\end{itemize}
Moreover, $Z$ respects the following condition: if there exists a directed path in $Z$ from a decision node $\alpha$ to a decision node $\beta$ then $\alpha$ and $\beta$ are labelled by a different variables.

Let $Y \subseteq X$ and  $\tau \in \{0,1\}^Y$.  A directed path $P$ in $Z$ is said to be {\em compatible with $\tau$} if for every edge $(\beta,\gamma)$ of $P$ where $\beta$ is a decision node on variable $v$, we have: 
\begin{itemize}
\item $v \in Y$ and
\item $(\beta,\gamma)$ is labelled by $\tau(v)$.
\end{itemize}
We say that a node $\beta$ is {\em reached by $\tau$} if there exists a path from the root of $Z$ to $\beta$ that is compatible with $\tau$. For $\tau \in \{0,1\}^X$, we say that $\tau$ satisfies $Z$ if no $0$-sink is reached by $\tau$. The Boolean function computed by $Z$ is the Boolean function that maps satisfying assignment of $Z$ to $1$ and the others to $0$. Figure~\ref{fig:decdnnfexample} pictures an $\land$-FBDD computing the Boolean Function on variables $x,y,z$ that is true if and only if exactly two variables among $x,y$ and $z$ are set to $1$. The nodes reached by $\tau = \{x\mapsto 0, y\mapsto 1, z \mapsto 1\}$ are depicted in bold font. It is a satisfying assignment as only the sink $1$ is reached.

\begin{figure}
  \centering
  \begin{tikzpicture}
    \node (x) at (0,0) {$\pmb x$};
\node (a) at (-1,-1) {$\pmb{\wedge}$};
\node (y1) at (1,-1) {$y$};
\node (y2) at (-2,-2) {$\pmb y$};
\node (z2) at (0,-2) {$\pmb z$};
\node (z1) at (2,-2) {$z$};
\node (f) at (-1,-3) {$0$};
\node (t) at (1,-3) {$\pmb 1$};

\draw[->,dashed,very thick] (x) -- (a);
\draw[->,very thick] (a) -- (z2);
\draw[->,very thick] (a) -- (y2);
\draw[->] (x) -- (y1);
\draw[->,dashed] (y1) -- (z2);
\draw[->] (y1) -- (z1);
\draw[->,very thick] (y2) -- (t);
\draw[->,dashed] (y2) -- (f);
\draw[->,very thick] (z2) -- (t);
\draw[->,dashed] (z2) -- (f);
\draw[->] (z1) -- (f);
\draw[->,dashed] (z1) -- (t);
  \end{tikzpicture}
  \caption{A decDNNF computing the assignment $\tau$ of $\{x,y,z\}$ such that $\tau(x) + \tau(y)+ \tau(z) = 2$. In bold, the nodes reached by $\{x\mapsto 0, y\mapsto 1, z \mapsto 1\}$.}
  \label{fig:decdnnfexample}
\end{figure}
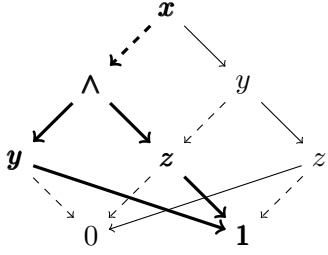

The {\em size} of $Z$, denoted by $|Z|$, is defined to be the number of edges of the underlying DAG of $Z$. Given a $\afbdd$ $Z$ on variables $X$, we denote by $\var(Z)$ the set of variables labelling its decision nodes. Observe that $\var(Z) \subseteq X$ but that the inclusion may be strict. Given a node $\alpha$ of $Z$, we denote by $Z_\alpha$ the $\afbdd$ whose root is $\alpha$ and containing every node that can be reached from $\alpha$ in $Z$.

An FBDD is an $\afbdd$ that has no $\land$-nodes.

\paragraph{Decision DNNF.} Let $\alpha$ be an $\land$-node and let $\alpha_1,\alpha_2$ be its two successor. The node $\alpha$ is said to be {\em decomposable} if $\var(Z_{\alpha_1}) \cap \var(Z_{\alpha_2}) = \emptyset$. In this paper, we sometimes use the notation $\land_d$ to stress out the fact that A decision DNNF, decDNNF for short, is an $\afbdd$ whose all $\land$-nodes are decomposable. It is easy to check that the $\afbdd$ depicted in Figure~\ref{fig:decdnnfexample} is an decDNNF since the variables on the left side of the $\land$-nodes are $\{y\}$ and the variables on the right side are $\{z\}$.

\paragraph{Normalising decDNNF.} Observe that if $\alpha$ is a decomposable $\land$-node with successors $\alpha_1,\alpha_2$ then $Z_{\alpha_1}$ and $Z_{\alpha_2}$ do not share any decision nodes otherwise they would not have disjoint variables. That is, the only nodes they share do not contain variables and then computes constants that can be replaced by sinks. Thus, we can assume without any increase in the size of $Z$, that the only nodes that are shared by $Z_{\alpha_1}$ and $Z_{\alpha_2}$ are sinks. By copying each sink such that they have exactly one ingoing edge, we can construct a decDNNF equivalent to $Z$ of size at most $2|Z|$ such that for every $\land$-node with successors $\alpha_1,\alpha_2$, $Z_{\alpha_1}$ and $Z_{\alpha_2}$ are disjoint circuits. In this paper, we will always assume that every decomposable $\land$-nodes have this property.

\subsection{Restricted decDNNF}
\label{sec:restdecdnnf}

In the framework of branching programs, FBDD are distinguished from OBDD by the fact that in an OBDD, each source-sink path has to test the variables in the same order. This ordering constraint allows to have more tractable transformations on OBDD than on FBDD. In this section, we introduce two generalisations of such an idea for decDNNF. 

\paragraph{Structured decDNNF.} Structured DNNF have first been introduced in~\cite{PipatsrisawatD08} as a restriction of DNNF. The structuredness restriction corresponds to a restriction on the way variables are partitioned the nodes of the DNNF.

Let $X$ be a finite set of variables. A vtree $T$ on variables $X$ is a rooted tree whose leaves are in one to one correspondence with $X$ and such that every non-leaf node $t$ of $T$ has exactly two children. Given a node $t$ of $T$, we denote by $T_t$ the subtree of $T$ rooted in $t$ and $\var(t)$ the variables labelling the leaves of $T_t$.

A decDNNF $Z$ on variables $X$ {\em respects} a vtree on variables $X$ if: for every $\land$-node of $Z$ having children $v_1,v_2$, there exists a node $t$ in $T$ having children $t_1, t_2$ such that $\var(Z_{v_1}) \subseteq \var(t_1)$ and $\var(Z_{v_2}) \subseteq \var(t_2)$. For every decision node in $Z$ having children $v_1,v_2$ and testing the variable $x$, there exists $t$ in $T$ having children $t_1,t_2$ such that $x \in \var(t_1)$ and $\var(Z_{v_1}) \cup \var(Z_{v_2}) \subseteq \var(t_2)$. 

The class of decDNNF respecting a given vtree $T$ is denoted by $\text{decDNNF}_T$. A decDNNF $Z$ is said to be {\em structured} if there exists a vtree $T$ such that $Z$ respects $T$. 

\paragraph{Decomposable $\aobdd$ ($\adobdd$).} OBDDs are a well-known restriction of FBDDs where the variables have to appear on a fixed order along a path from the root to the sinks. We naturally generalise this restriction to $\land$-FBDD as follows: let $Z$ be an $\afbdd$ on variables $X$ and $<$ a linear order on $X$. We say that $Z$ respects the order $<$ if for every decision nodes $\alpha$ labeled with $x$ and $\beta$ labeled with $y$ of $Z$, if there exists a directed path from $\alpha$ to $\beta$ in $Z$, then $x < y$. 

The class of $\land$-FBDD respecting order $<$ is denoted by $\aobdd_<$. The class of $\land$-FBDD respecting some order $<$ on its variables is denoted by $\land$-OBDD. A decomposable $\aobdd$ is an $\aobdd$ whose $\land$-nodes are all decomposable. We denote the class of decomposable $\aobdd$ by $\adobdd$. An OBDD is an $\aobdd$ that has no $\land$-nodes.

\paragraph{Comparing structured decDNNF and $\adobdd$.} It is easy to see that the decDNNF represented in Figure~\ref{fig:decdnnfexample} is also a $\adobdd$ for the order $x < y < z$. Moreover, it is also structured since it respects the vtree given in Figure~\ref{fig:vtree1}.

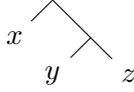
\begin{figure}
  \centering
  \begin{tikzpicture}[scale=.5]
    \node (x) at (-1,-1) {$x$};
    \node (y) at (0,-2) {$y$};
    \node (z) at (2,-2) {$z$};
    \draw (x) -- (0,0) -- (1,-1);
    \draw (y) -- (1,-1) -- (z);
  \end{tikzpicture}
  \caption{A vtree respected by the decDNNF of Figure~\ref{fig:decdnnfexample}}
  \label{fig:vtree1}
\end{figure}

We will see that we can use the instances of Section~\ref{sec:sdec} to separate structured decDNNF from $\adobdd$, by construction instances having structured decDNNF of size at least $n^{\Omega(\log n)}$ and polynomial size $\adobdd$. A stronger separation of both classes may be found in~\cite{Capelli17} where CNF-formulas with $n$ variables having polynomial size $\adobdd$ are proven to have no structured decDNNF of size $2^{\Omega(\sqrt(n))}$.

However, it is not clear to us if structured decDNNF are weaker than $\adobdd$. Figure~\ref{fig:sdec} gives a decDNNF which is structured -- it indeed respects the vtree of Figure~\ref{fig:vtree1} -- but is not an $\adobdd$. Even if this example is easy to transform into an $\adobdd$, we do not know if such a transformation always exists.

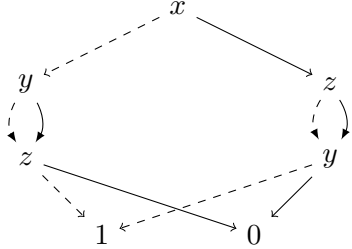
\begin{figure}
  \centering
  \begin{tikzpicture}
    \node (x) at (0,0) {$x$};
\node (y1) at (-2,-1) {$y$};
\node (z1) at (-2,-2) {$z$};

\draw[-latex,bend right, dashed] (y1) edge (z1);
\draw[-latex,bend left] (y1) edge (z1);

\node (z2) at (2,-1) {$z$};
\node (y2) at (2,-2) {$y$};

\draw[-latex,bend right, dashed] (z2) edge (y2);
\draw[-latex,bend left] (z2) edge (y2);

\draw[dashed,->](x) -- (y1);
\draw[->](x) -- (z2);

\node (t) at (-1,-3) {$1$};
\node (f) at (1,-3) {$0$};

\draw[->,dashed] (z1) -- (t);
\draw[->] (z1) -- (f);

\draw[->,dashed] (y2) -- (t);
\draw[->] (y2) -- (f);
  \end{tikzpicture}
  \caption{A structured decDNNF respecting the vtree of Figure~\ref{fig:vtree1} that are not an $\adobdd$.}
  \label{fig:sdec}
\end{figure}

\section{Lower bound for structured decDNNF}
\label{sec:sdec}

This section is dedicated to the proof of our first main theorem:
\begin{theorem}
\label{thm:sdeclb}
  For every $k$, there exists an infinite family $\calF$ of CNF of incidence treewidth $k$ having no structured decDNNF smaller than $n^{\Omega(k)}$ where $n = |\var(F)|$.
\end{theorem}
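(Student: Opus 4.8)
The plan is to bound the size of any structured decDNNF for $F$ from below by the number of combinatorial rectangles needed to cover the satisfying assignments of $F$ across a balanced partition of its variables, and then to prove a \emph{best-partition} lower bound on this rectangle number. First I would establish the by-now-standard link between structured decDNNF and communication complexity: if $Z$ respects a vtree $T$ and $t$ is an internal node of $T$ with $A := \var(t)$ and $\bar A := \var(F)\setminus A$, then $F$ can be written as a disjunction of at most $|Z|$ rectangles of the form $g(A)\wedge h(\bar A)$, each implying $F$; hence $|Z|$ is at least the rectangle-cover number of $F$ with respect to $(A,\bar A)$. Since the compiler chooses $T$ adversarially I cannot pick the partition, but every binary vtree has an internal node $t$ with $n/3 \le |\var(t)| \le 2n/3$, so it suffices to prove that \emph{every} balanced bipartition $(A,\bar A)$ of $\var(F)$ forces a rectangle cover of size $n^{\Omega(k)}$.

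I would then design the family $\calF$. Large-arity clauses are unavoidable here: if every clause of $F$ had bounded arity then $\ptw(F)$ would be within a constant factor of $\itw(F)$ (replace each clause vertex by its variables in a tree decomposition), and bounded primal treewidth already yields FPT-size decDNNF, contradicting any superpolynomial lower bound. The inequality $\itw(F)\le\ptw(F)$ of Theorem~\ref{thm:ptwvsitw} is strict precisely because a clause on $d$ variables is a $d$-clique in the primal graph but only a star in the incidence graph. I would therefore take a width-$k$, length-$m$ ``grid'' monotone CNF on $\Theta(km)$ variables arranged in $k$ rows and $m$ columns, with large-arity clauses running along the rows and bounded-arity clauses coupling the $k$ variables of each column; letting $m$ grow gives the infinite family, all of incidence treewidth $k$. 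Checking $\itw(F)=k$ is the routine part: a path-shaped tree decomposition sweeping column by column, whose bags hold the $k$ variables of the current column together with the incident clause vertices, has width $k$. The constraints would be chosen so that each row independently carries a value from a domain of size $\Theta(m)$, so that models of $F$ encode $k$ loosely coupled coordinates.

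For the lower bound I would, given an arbitrary balanced bipartition $(A,\bar A)$, build a fooling set $S\subseteq \mathrm{sat}(F)$ of size $n^{\Omega(k)}$: a set of satisfying assignments such that for every two distinct $(a_1,b_1),(a_2,b_2)\in S$ at least one recombination $(a_1,b_2)$ or $(a_2,b_1)$ falsifies a clause. A standard argument then shows that no all-ones rectangle can contain two elements of $S$, so the rectangle-cover number, and hence $|Z|$, is at least $|S|$. The design ensures that balance forces $(A,\bar A)$ to ``split'' $\Omega(k)$ coordinates: either it cuts $\Omega(k)$ rows, or, if many rows lie entirely on one side, it must cut the column clauses coupling $\Omega(k)$ of them. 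Each split coordinate can be toggled over its domain of size $\Theta(m)$ independently of the others while staying inside $\mathrm{sat}(F)$ yet incompatibly across the cut, contributing an independent factor $\Theta(m)$ to $|S|$; hence $|S|\ge m^{\Omega(k)}=n^{\Omega(k)}$.

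The hard part will be exactly this best-partition step. Balance in the number of variables does not by itself guarantee a large ``structured'' cut, so the real work is to show that \emph{no} balanced bipartition can simultaneously keep all coordinates local and avoid cutting their coupling clauses; I expect this to require a careful case analysis of how an arbitrary $A$ meets the rows and columns, together with constraints engineered so that both ``vertical'' and ``horizontal'' balanced cuts are expensive. The delicate point is that the two requirements are in tension: strengthening the coupling between rows makes every balanced partition harder but also tends to push $\itw(F)$ above $k$, so the coupling must be rich enough to defeat all partitions yet sparse enough to admit the width-$k$ incidence tree decomposition. Balancing these opposing pressures is the crux of the argument.
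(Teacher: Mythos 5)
Your overall framework (vtree node with a balanced split, rectangle cover of size at most $|Z|$, fooling set against every balanced partition) is sound as far as it goes, and it is a genuinely different route from the paper, which never touches communication complexity. But the proposal has a real gap exactly where you flag it: the family $\calF$ is never actually constructed and the best-partition lower bound is never proved, and these two items together \emph{are} the theorem. Worse, there are concrete reasons to doubt the sketched $k\times m$ grid design. Against a ``vertical'' balanced cut (a prefix of columns versus a suffix), each row is cut exactly once; for that row to contribute a multiplicative factor of $\Theta(m)$ to a fooling set, the row gadget must force $\Omega(\log m)$ bits of information about its value to cross that specific cut, i.e.\ behave like equality on a domain of size $m$ across \emph{every} balanced column boundary. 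A row made of bounded-arity clauses plus one long clause (the only kind of gadget compatible with keeping $\itw$ at $k$ while rows have length $m$) typically needs only $O(1)$ bits across such a cut (``is there already a falsified/zeroed variable on my side?''), which caps the fooling set at $2^{O(k)}$ for that partition. Against a ``horizontal'' cut (half the rows on each side), no row is cut at all and you must extract the whole $m^{\Omega(k)}$ from the column-coupling clauses alone, which bounded-arity couplings among $k$ variables per column do not obviously supply. You correctly identify the tension --- richer coupling defeats more partitions but inflates $\itw$ --- but you do not resolve it, and resolving it is the entire content of the result.

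For comparison, the paper sidesteps best-partition arguments altogether. The single long clause $C_G=\bigvee_{x}\neg x$ forces the root of any decDNNF for $F_G$ to be a decision node (an $\wedge$-node would let the all-ones assignment slip through), and since $F_G[x\mapsto 1]=F_{G\setminus x}$ this propagates inductively to yield a source--sink decision path testing all $n$ variables; such a path forces any respected vtree to be linear, and decDNNFs over linear vtrees collapse to FBDDs with only constant-factor blowup. The $n^{\Omega(k)}$ bound is then imported from Razgon's FBDD lower bound for $F_G\setminus C_G$ over a specially engineered family of treewidth-$k$ graphs in which \emph{every} vertex ordering admits a crossing matching of size $\Omega(k\log n)$ --- a property that plain grids do not have (a column-by-column order crosses only $O(k)$ edges at any point). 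If you want to pursue your route, you would need instances with an analogous ``every balanced bipartition is expensive'' property, and you should expect the construction to be at least as delicate as Razgon's; the grid as described will not do it.
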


The instances we use for proving this lower bounds are the following: for a graph $G = (V,E)$, we define a CNF $F_G$ from $G$ on variables $V$ as follows:
\[ F_G = (\bigvee_{x \in V} \neg x) \land \bigwedge_{(x,y) \in E} (x \lor y). \]
The clause $\bigvee_{x \in V} \neg x$ is denoted by $C_G$. 

The proof then goes as follows. After some preliminaries observations on structured decDNNF in Section~\ref{sec:obssdecdnnf}, we show in Section~\ref{sec:sdecinstances} that the smallest structured decDNNF computing $F_G$ is actually an FBDD. We finally show the lower bound in Section~\ref{sec:sdeclb} by using a non-FPT lower bound from~\cite{Razgon14} on the size of FBDD computing $F_G \setminus C_G$,

\subsection{Preliminary observations on structured decDNNF}
\label{sec:obssdecdnnf}

In this section, we are interested in structured decDNNF having a sequence of decision nodes testing all of its variables. We show that these structured decDNNF can be efficiently simulated by FBDDs. Given a decDNNF $Z$ on variable $X$ and $Y \subseteq X$, we call a path $P = (v_1,\ldots,v_{n})$ of $Z$ a decision path on $Y$ if $P$ verifies the following:
\begin{itemize}
\item for all $i$, $v_i$ is a decision node,
\item for all $x \in Y$, there exists $i$ such that $v_i$ is labelled with the variable $x$.
\end{itemize}

We say that a vtree $T$ on $X$ is {\em linear} if for every non-leaf node of $T$, at least one of its child is a leaf (see Figure~\ref{fig:pathlike}). 

\begin{figure}
  \centering
  \begin{tikzpicture}[scale=.5]
    \def\mylist{0,...,3} 
  \foreach \i in \mylist { 
    
    \pgfmathtruncatemacro{\k}{\i-1};
    \pgfmathtruncatemacro{\j}{-\i-1};
    \node (x\i) at (\k,\j) {$x_\i$}; 
    \draw (\i,-\i) -- (x\i);
  }
  \foreach \i in {0,...,2} { 
    \pgfmathtruncatemacro{\j}{\i+1};
    \draw (\i,-\i) -- (\j,-\j); 
  }
  \node (x4) at (4,-4) {$x_4$};
  \draw (x4) -- (3,-3);
  \end{tikzpicture}
  \caption{A linear vtree.}
  \label{fig:pathlike}
\end{figure}
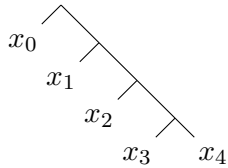

It has been observed but not proven by Darwiche and Pipatsrisawat in~\cite{PipatsrisawatD08} that a decDNNF respecting a linear vtree can be efficiently simulated with an OBDD. Lemma~\ref{lem:pathlikeOBDD} gives a proof of a slightly less general version of this observation: we prove that such decDNNF can be efficiently simulated by FBDD. It is not hard to show from there that the FBDD constructed in the proof can be easily turned into an OBDD.
\begin{lemma}
\label{lem:pathlikeOBDD}
  Let $T$ be a linear vtree on variables $X$ and $Z$ a decDNNF respecting $T$. There exists an FBDD $Z'$ of size $O(|Z|)$ equivalent to $Z$.
\end{lemma}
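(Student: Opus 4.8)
# Proof Proposal for Lemma~\ref{lem:pathlikeOBDD}

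\textbf{Overall approach.} The plan is to eliminate the $\land$-nodes of $Z$ one at a time by exploiting the very restrictive structure imposed by a linear vtree. The key insight I would use is that when $Z$ respects a linear vtree $T$, every $\land$-node is extremely unbalanced: at each internal node $t$ of $T$, one child is a leaf, so one side of any $\land$-node sitting at $t$ can involve \emph{at most one variable}. Concretely, if $\alpha$ is an $\land$-node with successors $\alpha_1,\alpha_2$ respecting the node $t$ with children $t_1$ (a leaf carrying variable $x$) and $t_2$, then $\var(Z_{\alpha_1}) \subseteq \{x\}$. Thus one conjunct is a function of a single variable, i.e.\ it is one of the constants $0,1$, the literal $x$, or the literal $\neg x$. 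This is the structural fact that makes the simulation cheap, and establishing it cleanly is the first step.

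\textbf{Key steps in order.} First I would prove the single-variable observation above by a direct appeal to the definition of respecting a vtree together with linearity of $T$, handling the symmetric case where $t_2$ is the leaf. Second, I would classify each $\land$-node by what its small-side subcircuit $Z_{\alpha_1}$ computes as a function of $x$: if it computes the constant $0$, the whole $\land$-node can be replaced by a $0$-sink; if it computes the constant $1$, the $\land$-node can be bypassed to $\alpha_2$; and if it computes a literal $x$ or $\neg x$, then $\alpha$ behaves like a decision node on $x$ that routes the satisfying branch to $\alpha_2$ and the falsifying branch to a $0$-sink. In each case the $\land$-node is replaced by an ordinary decision node (or a sink), at constant cost, and decomposability guarantees $x \notin \var(Z_{\alpha_2})$ so the read-once property is preserved. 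Third, I would argue that performing these local replacements repeatedly, processing $\land$-nodes in a bottom-up order, terminates and produces an $\land$-free DAG, i.e.\ an FBDD $Z'$ computing the same Boolean function, with $|Z'| = O(|Z|)$ since each $\land$-node contributes only a bounded number of new edges.

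\textbf{Where the difficulties lie.} The main obstacle I anticipate is bookkeeping rather than conceptual. The replacements must be applied so that, when I rewrite an $\land$-node, its small-side subcircuit has already been reduced to a recognizable single-variable function; this is why the bottom-up processing order matters, and I would make this precise by induction on the height of nodes in $Z$. A secondary subtlety is that after redirecting edges I must recheck the global read-once (FBDD) condition: along any root-to-sink path no variable is tested twice. Here I would lean on decomposability, which ensures the variable $x$ absorbed from the small side does not reappear in $Z_{\alpha_2}$, together with the fact that the original $Z$ already satisfied the read-once condition on its decision nodes. Finally, I would remark (as the statement's surrounding text promises) that because $T$ is linear the variables are in fact encountered in a consistent order, so the resulting FBDD can be massaged into an OBDD, but I would keep the formal claim at the level of FBDD as stated.
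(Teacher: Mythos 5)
Your proposal matches the paper's proof essentially step for step: it uses linearity of the vtree to conclude that one side of each $\land$-node depends on at most one variable, then performs the same four-way case analysis ($0$, $1$, $x$, $\neg x$) to replace each $\land$-node by a sink, a bypass, or a decision node. The additional remarks about processing order and preserving the read-once property are sound refinements of the same argument, so no changes are needed.
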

\begin{proof}[Proof.] Let $Z$ be a decDNNF respecting a linear vtree $T$. We explain how to remove $\land$-nodes from $Z$. Let $v$ be an $\land$-node of $Z$ having children $v_1,v_2$ and let $t$ be the node in $T$ with children $t_1,t_2$ such that $\var(Z_{v_1}) \subseteq \var(t_1)$ and $\var(Z_{v_2}) \subseteq \var(t_2)$. Since $T$ is linear, either $t_1$ or $t_2$ is a leaf of $T$. Assume wlog that $t_1$ is a leaf labelled with $x \in X$. Thus $\var(Z_{v_1}) \subseteq \{x\}$. In other words, $Z_{v_1}$ computes either $x \mapsto 0$, $x \mapsto 1$, $x \mapsto x$ or $x \mapsto \neg x$. If $Z_{v_1}$ computes $x \mapsto 0$, then $Z_v$ computes the function $0$. Hence, we can replace $v$ by a $0$-sink. If $Z_{v_1}$ computes $x \mapsto 1$, then $Z_v$ computes the same function as $Z_{v_2}$, hence we can remove $v$ and $v_1$ from $Z$ and connects every predecessor of $v$ directly to $v_2$.  If $Z_{v_1}$ computes $x \mapsto x$, then $Z_v$ computes $x \land Z_{v_2}$. Hence, we can replace $v$ by a decision-node whose $1$-edge is connected to $v_2$ and $0$-edge is connected to the constant $0$. The case where $Z_{v_1}$ computes $x \mapsto \neg x$ is symmetric. 

It is easy to check that for each case, the transformation does not change the function computed by $Z$ nor does it increase its size. Applying this transformation to every $\land$-node of $Z$ leads to an FBDD $Z'$, equivalent to $Z$ and of size at most $|Z|$.
\end{proof}

The following lemma suggests that long paths in structured decDNNF put strong constraints on the respected vtree:
\begin{lemma}
  \label{lem:longpathPL} Let $Z$ be a structured decDNNF on variables $X$ such that there exists a source-sink decision path on $X$. Every vtree $T$ such that $Z$ respects $T$ is linear. 
\end{lemma}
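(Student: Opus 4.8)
The plan is to prove the statement by induction on $|X|$, exploiting the fact that the very first test along the source-sink decision path completely pins down the top of \emph{any} respected vtree. Write the path as $P = (v_1,\dots,v_n)$ with $v_1$ the root of $Z$, and let $x_i$ be the variable tested at $v_i$. Since $Z$ is read-once and $P$ tests every variable, the $x_i$ are pairwise distinct and $\var(Z) = X$. Moreover, for each $i$ there is a directed path from $v_{i-1}$ through $v_i$ to every node of $Z_{v_i}$, so read-onceness forbids any $x_1,\dots,x_{i-1}$ below $v_i$ and gives $\var(Z_{v_i}) = \{x_i,\dots,x_n\}$. In particular $Z_{v_2}$ is again a structured decDNNF, its variable set is $X\setminus\{x_1\}$, and $(v_2,\dots,v_n)$ is a source-sink decision path on $X\setminus\{x_1\}$, which is exactly the shape needed to invoke the induction hypothesis.

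First I would analyse the root. As $v_1$ is a decision node on $x_1$ with children $a,b$, the respecting condition hands me a node $t^\ast$ of $T$ with children $t_1^\ast,t_2^\ast$ satisfying $x_1 \in \var(t_1^\ast)$ and $\var(Z_a)\cup\var(Z_b)\subseteq \var(t_2^\ast)$. From $\var(Z_{v_1}) = X$ and $\var(Z_a)\cup\var(Z_b) = X\setminus\{x_1\}$ (read-onceness), disjointness of the two child bags forces $\var(t_1^\ast) = \{x_1\}$ and $\var(t_2^\ast) = X\setminus\{x_1\}$, whence $\var(t^\ast) = X$ and $t^\ast$ is the root $r$ of $T$ (the unique node whose bag is all of $X$). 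Thus $r$ has a leaf child labelled $x_1$ and a child $t_2 := t_2^\ast$ with $\var(t_2) = X\setminus\{x_1\}$, so $r$ already meets the linearity requirement.

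The heart of the argument is then to show that $Z_{v_2}$ respects the sub-vtree $T_{t_2}$: the induction hypothesis will then give that $T_{t_2}$ is linear, and since every non-leaf node of $T$ other than $r$ lies in $T_{t_2}$, this finishes the proof. For each internal node $w$ of $Z_{v_2}$ I take the witness $s$ supplied by the hypothesis that $Z$ respects $T$ and argue it can be chosen inside $T_{t_2}$. The only nodes of $T$ outside $T_{t_2}$ are $r$ and the leaf $x_1$, and the leaf cannot witness an internal node, so it suffices to rule out $s = r$. For an $\land$-node $w$ with children $w_1,w_2$, being witnessed by $r$ would force (say) $\var(Z_{w_1})\subseteq\{x_1\}$, hence $\var(Z_{w_1}) = \emptyset$ because $\var(Z_w)\subseteq X\setminus\{x_1\}$; for a decision node on $y \ne x_1$ it would force $\var(Z_{w_1})\cup\var(Z_{w_2}) = \emptyset$, and such a node is equally well witnessed by the parent of the leaf $y$ inside $T_{t_2}$ (valid once $|\var(t_2)|\ge 2$, the complementary case being the base of the induction).

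The main obstacle is the one remaining degenerate configuration: an $\land$-node one of whose children computes a constant, which can genuinely fail to have a witness inside $T_{t_2}$. I would remove it by a preliminary normalisation: whenever an $\land$-node $w$ has a child $w_1$ with $\var(Z_{w_1}) = \emptyset$, so $Z_{w_1}$ computes a constant $c$, replace $w$ by a $0$-sink if $c=0$ and by its other child $w_2$ if $c=1$. This strictly decreases the number of $\land$-nodes, changes neither the computed function nor the set $\var$ of any surviving node, and therefore preserves both the property of respecting $T$ and the existence of the source-sink decision path on $X$; so proving the lemma for the normalised circuit proves it for $T$. On the normalised circuit both children of every $\land$-node have nonempty variable set, the case $s=r$ above is impossible, and $Z_{v_2}$ does respect $T_{t_2}$. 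The only mildly delicate points left are checking that this normalisation preserves ``respects $T$'' and the read-once bookkeeping that yields $\var(Z_{v_i}) = \{x_i,\dots,x_n\}$; with these in place the induction closes cleanly.
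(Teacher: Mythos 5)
Your proof is correct and follows essentially the same route as the paper's: induct on $|X|$, use the root decision node together with structuredness to force the vtree root to have a leaf child labelled $x_1$, then recurse on $Z_{v_2}$ and $T_{t_2}$. The only difference is that you explicitly justify the step ``$Z_{v_2}$ respects $T_{t_2}$'' (via the re-witnessing argument and the normalisation of $\land$-nodes with constant children), a point the paper's proof asserts without detail.
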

\begin{proof}
  The proof is by induction on $n = |X|$. If $n \leq 2$, then every vtree is linear so there is nothing to prove. Now assume that the result holds for any structured decDNNF $Z$ with $n$ variables and let $X$ be such that $|X|=n+1$ and $Z$ be a structured decDNNF such that there exists a path $v_1,\ldots,v_{n+1},v_{n+2}$ in $Z$ as in the statement of the lemma. Let $T$ be a vtree such that $Z$ respects $T$. By definition, $v_1$ is a decision node on $x_1$ and one of its children is $v_2$ and we have $\var(Z_{v_2}) = X \setminus \{x_1\}$ since every variable of $X$ is tested on the path. 

Since $Z$ is structured, there exists a node $t$ in $T$ with children $t_1,t_2$ such that $x_1 \in \var(t_1)$ and $X \setminus \{x_1\} = \var(Z_{v_2}) \subseteq \var(t_2)$. Thus, $\var(t_1) = \{x_1\}$ and $X \setminus \{x_1\} = \var(t_2)$. That is, $t$ is the root of $t$ and $t_1$ is a leaf labelled with $x_1$. 

Now, observe that $Z_{v_2}$ is a structured decDNNF on variables $X \setminus \{x_1\}$. Moreover, $Z_{v_2}$ respects $T_{t_2}$ and there exists a path $v_2,\ldots,v_{n+1},v_{n+2}$ of length $n$ of decision nodes and such that for every $x \in X\setminus\{x_1\}$, there exists $i$ such that $v_i$ tests $x$. Thus, we can apply the induction hypothesis on $Z_{v_2}$ and it follows that $T_{t_2}$ is linear. 

Thus let $t$ be a non-leaf node of $T$. If $t$ is the root, then one of its child is a leaf labelled by $x_1$. Otherwise, $t$ is a node of $T_{t_2}$. Since $T_{t_2}$ is linear, there exists a child of $t$ which is a leaf labelled with a variable $x \in X \setminus \{x_1\}$. Thus, $T$ is linear.
\end{proof}

An immediate corollary of Lemma~\ref{lem:longpathPL} and Lemma~\ref{lem:pathlikeOBDD} is the following:
\begin{corollary}
    \label{cor:longpathOBDD} Let $Z$ be a structured decDNNF on variables $X$ such that there exists a source-sink decision path on $X$ $v_1,\ldots,v_n,v_{n+1}$. There exists an FBDD of size $O(|Z|)$ computing the same function as $Z$.
\end{corollary}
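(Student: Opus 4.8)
The plan is simply to chain the two preceding lemmas, so the argument will be a short composition rather than a fresh proof. First I would unpack the hypothesis that $Z$ is structured: by definition this means there exists at least one vtree $T$ on $X$ such that $Z$ respects $T$. I fix any such $T$ and carry it through the rest of the argument.

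Next I would feed the remaining hypothesis --- that $Z$ admits a source-sink decision path on $X$, i.e.\ a path of decision nodes testing every variable of $X$ --- into Lemma~\ref{lem:longpathPL}. Since $Z$ is a structured decDNNF respecting $T$ and this long decision path exists, that lemma applies verbatim and forces $T$ to be linear. It is convenient here that Lemma~\ref{lem:longpathPL} concludes that \emph{every} vtree respected by $Z$ is linear, so no care is needed about which particular $T$ was fixed in the previous step: any admissible choice is linear.

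Finally, with a linear vtree $T$ respected by $Z$ now in hand, I would invoke Lemma~\ref{lem:pathlikeOBDD} directly. Its hypotheses --- a linear vtree $T$ on $X$ together with a decDNNF $Z$ respecting $T$ --- are exactly what the previous two steps have established, and its conclusion is an FBDD $Z'$ of size $O(|Z|)$ equivalent to $Z$. This is precisely the statement of the corollary, so nothing further is required.

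The main obstacle, such as it is, has already been discharged inside the two lemmas rather than in the corollary itself: Lemma~\ref{lem:longpathPL} does the structural work of deducing linearity of the vtree from the existence of a long decision path (by induction on $|X|$, peeling off the first tested variable at the root of $T$), while Lemma~\ref{lem:pathlikeOBDD} does the combinatorial work of eliminating $\land$-nodes one at a time without increasing the size, using that a linear vtree forces one child of each $\land$-node to compute a function of a single variable. The corollary needs only the observation that the hypotheses of the first lemma are met and that its output supplies the hypotheses of the second, so I expect no genuine difficulty at this final step.
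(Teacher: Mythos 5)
Your proposal is correct and matches the paper exactly: the paper presents this as an immediate consequence of Lemma~\ref{lem:longpathPL} (the long decision path forces any respected vtree to be linear) followed by Lemma~\ref{lem:pathlikeOBDD} (a decDNNF respecting a linear vtree converts to an FBDD of size $O(|Z|)$). Nothing further is needed.
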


\subsection{Instances}
\label{sec:sdecinstances}

In this section, we construct CNFs such that every structured decDNNF computing them must contain a path of decision-nodes testing every variables. By Corollary~\ref{cor:longpathOBDD}, it implies that every structured decDNNF computing them can be efficiently simulated by FBDD. We then use lower bounds from~\cite{Razgon14} on FBDD to conclude.

The two following lemmas show that the clause $C_G$ forces a long path of decision nodes in any decDNNF computing $F_G$.

\begin{lemma}
  \label{lem:fgminusx} Let $G = (V,E)$ be a graph, $|V| \geq 2$ and $x \in V$. We have
\[ F_G[x \mapsto 1] = F_{G \setminus x}. \]
\end{lemma}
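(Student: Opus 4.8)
The plan is to establish the identity clause by clause, reading $F_G[x \mapsto 1]$ as the CNF obtained from $F_G$ by the standard substitution rule: a literal falsified by $x \mapsto 1$ is deleted from its clause, and any clause that becomes satisfied is removed (the resulting syntactic equality then immediately entails equality of the Boolean functions). The formula $F_G$ consists of exactly two kinds of clauses, the ``big'' clause $C_G = \bigvee_{y \in V} \neg y$ and one edge clause $(y \lor z)$ for each edge $(y,z) \in E$, so I would analyse these two kinds in turn and then reassemble.

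First I would treat $C_G$. Setting $x$ to $1$ falsifies the literal $\neg x$, which is therefore deleted, while every literal $\neg y$ with $y \neq x$ is untouched. Hence $C_G[x \mapsto 1] = \bigvee_{y \in V \setminus \{x\}} \neg y$, and since $G \setminus x$ has vertex set $V \setminus \{x\}$ this is precisely $C_{G \setminus x}$. This is the only place where the hypothesis $|V| \geq 2$ is needed: it guarantees that $V \setminus \{x\}$ is nonempty, so $C_{G \setminus x}$ is a genuine clause rather than the degenerate empty disjunction.

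Next I would treat the edge clauses. For an edge $(y,z) \in E$ with $x \notin \{y,z\}$, the substitution leaves $(y \lor z)$ unchanged, and by the definition of $G \setminus x$ these are exactly the edges surviving in $G \setminus x$. For an edge incident to $x$, say $y = x$, the clause becomes $(1 \lor z)$, which is satisfied and hence removed; again by definition these are precisely the edges of $G$ discarded when forming $G \setminus x$. Thus the edge clauses that survive the substitution are in bijection with the edges of $G \setminus x$, yielding $\bigwedge_{(y,z) \in E,\, x \notin \{y,z\}} (y \lor z)$.

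Combining the two computations gives
\[ F_G[x \mapsto 1] = C_{G \setminus x} \land \bigwedge_{(y,z) \in E,\, x \notin \{y,z\}} (y \lor z) = F_{G \setminus x}, \]
which is the claim. I do not expect a real obstacle here, as the whole argument is a routine case split over the clauses of $F_G$; the only subtlety worth flagging is the empty-clause degeneracy, which is exactly what the assumption $|V| \geq 2$ is there to exclude.
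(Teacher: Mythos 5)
Your proof is correct, but it takes a different route from the paper's. The paper argues semantically: it fixes a satisfying assignment of one side, extends or restricts it by $x \mapsto 1$, and checks clause by clause (with the same case split on edges incident to $x$ and on $C_G$) that the result satisfies the other side, doing this in both directions. You instead compute the restriction syntactically, showing that the CNF obtained from $F_G$ by the standard unit-substitution rule is \emph{literally} the formula $F_{G\setminus x}$. Your version is arguably tidier and slightly stronger, since syntactic identity of the formulas immediately gives equality of the functions and avoids the two-direction bookkeeping; the one thing to be careful about is that the paper defines $F[\tau]$ as an operation on Boolean functions, not on CNFs, so your argument implicitly relies on the (standard and sound, but worth stating) fact that deleting falsified literals and removing satisfied clauses computes the restriction of the function defined by the formula. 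Your remark about $|V| \geq 2$ guarding against the degenerate empty clause $C_{G\setminus x}$ correctly identifies the role of that hypothesis.
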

\begin{proof}
We show that both functions have the same satisfying assignments.  Let $\tau \models F_{G \setminus x}$. We claim that $\tau' = \tau \cup \{x \mapsto 1\} \models F_G$. Indeed, since $\tau \models F_{G \setminus x}$ and $|V| \geq 2$, we have that $\tau \models \bigvee_{y \in V \setminus \{x\}} \neg y$ thus there exists $y \in V \setminus \{x\}$ such that $\tau(y) = 0$  and then $\tau \models C_G$. Now, let $(u,v) \in E$. If both $u \neq x$ and $v \neq x$ then $(u,v)$ is also an edge of $G \setminus x$ and then $\tau \models u \vee v$. Now if $u = x$, then clearly $\tau' \models u \vee v$.  Thus, $\tau' \models F_G$.

Now let $\tau'$ be such that $\tau' \models F_G$ and $\tau'(x) = 1$. Let $\tau = \tau'|_{V \setminus \{x\}}$. We claim that $\tau \models F_{G \setminus x}$. Indeed, let $(u,v)$ be an edge of $G \setminus x$. By definition of $G \setminus x$, $(u,v)$ is an edge of $G$ with $u \neq x$ and $v \neq x$. Thus since $\tau' \models u \vee v$ we have $\tau \models u \vee v$. Moreover, since $\tau' \models C_G$ and $\tau'(x) = 1$, there exists $y \in V\setminus \{x\}$ such that $\tau'(y) = 0$. Thus $\tau' \models \bigvee_{z \in V \setminus \{x\}} \neg z$.
\end{proof}

\begin{lemma}
  \label{lem:fglongpath} Let $G=(V,E)$ be a graph and let $Z$ be a decDNNF computing $F_G$. There exists a source-sink decision path on $X$ $v_1,\ldots,v_n,v_{n+1}$ in $Z$.
\end{lemma}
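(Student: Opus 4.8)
The plan is to trace the all-ones assignment $\mathbf{1} \in \{0,1\}^V$ through $Z$ and to prove, by induction on $n = |V|$, that this forces a root-to-sink path consisting entirely of decision nodes, exactly one per variable. Two elementary facts about $F_G$ drive everything. First, $F_G$ is sensitive at $\mathbf{1}$ in every coordinate: the assignment $\mathbf{1}$ is the unique falsifier of $C_G$ and satisfies every edge clause, so $F_G(\mathbf{1}) = 0$; but flipping any single variable $x$ to $0$ makes $\neg x$ satisfy $C_G$ while leaving every edge clause satisfied, so the result satisfies $F_G$. In particular $F_G$ is non-constant and depends on every variable, whence $\var(Z) = V$. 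Second, by the defining property of $\afbdd$ (no variable is tested twice along a directed path), if the root of $Z$ is a decision node on $x$ with $1$-child $v$, then $x \notin \var(Z_v)$ and the function computed at $v$ is $F_G[x \mapsto 1]$, which equals $F_{G \setminus x}$ by Lemma~\ref{lem:fgminusx}; thus $Z_v$ is a decDNNF on $V \setminus \{x\}$ computing $F_{G \setminus x}$.

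The induction then inspects the root. It cannot be a sink, as $F_G$ is non-constant. The crux is to rule out a decomposable $\land$-node, i.e. to show $F_G$ is irreducible with respect to conjunctive decomposition: it cannot equal $f_1 \wedge f_2$ for functions over disjoint nonempty sets $A \sqcup B = V$. Indeed, $F_G(\mathbf{1}) = 0$ would force, say, $f_1(\mathbf{1}|_A) = 0$; then for any $b \in B$ the assignment $\mathbf{1}[b \mapsto 0]$ agrees with $\mathbf{1}$ on $A$, so it too falsifies $F_G$ --- contradicting the sensitivity established above. This is exactly where the single global clause $C_G$ is used. Hence the root is a decision node on some $x$; I follow its $1$-edge to $v$, invoke the induction hypothesis on the decDNNF $Z_v$ computing $F_{G \setminus x}$ over the $n-1$ variables $V \setminus \{x\}$ to obtain a source-sink decision path on $V \setminus \{x\}$ starting at $v$, and prepend the root. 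The base case $n = 1$, where $F_G = \neg x$, is handled directly.

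I expect the $\land$-node case to be the main obstacle, and it carries one bookkeeping point: the sensitivity argument needs both sides $A$ and $B$ of the decomposition to be nonempty. I would therefore assume throughout that no $\land$-node of $Z$ has a child computing a constant --- such a node can be replaced by a $0$-sink or by its other child without increasing $|Z|$ or changing the computed function, exactly in the spirit of the normalisation already adopted in the paper. After this, every $\land$-node genuinely partitions the variables and the irreducibility argument applies verbatim. The remaining verifications are routine: the read-once property makes the variables along the constructed path pairwise distinct, and the induction guarantees they exhaust $V$ and that the path terminates at a sink.
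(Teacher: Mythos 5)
Your proof is correct and follows essentially the same route as the paper's: induction on $|V|$, ruling out an $\land$-root by exploiting that the all-ones assignment falsifies $F_G$ (because of $C_G$) while every single bit-flip satisfies it, and then recursing on the $1$-child via Lemma~\ref{lem:fgminusx}. Your sensitivity phrasing merely collapses the paper's three-way case analysis at the $\land$-node into one cleaner step, and your explicit normalisation of constant children matches the paper's ``w.l.o.g.\ both children have nonempty variable sets.''
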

\begin{proof}
  The proof is by induction on $n = |V|$. If $n = 1$, then $V = \{x\}$ and $F_G = \neg x$. Thus $F_G$ is reduced to a decision node on $x$ and the result is trivial. 

  Let $n \geq 1$, $G = (V,E)$ be a graph with $|V| = n+1 \geq 2$ and assume that the result holds for any graph having $n$ nodes. Let $Z$ be a decDNNF for $F_G$. We start by proving that we can assume that the source of $Z$ is necessarily a decision-node. Indeed, assume that the source of $Z$ is a decomposable $\land$-node with children $v_1,v_2$. Recall that we have shown that w.l.o.g, we can assume that $\var(Z_{v_1}) \neq \emptyset$ and $\var(Z_{v_1}) \neq \emptyset$. It means that $F_G$ can be rewritten as $F_1 \land F_2$ with $\var(F_1) = V_1 \neq \emptyset$ and $\var(F_2) = V_2 \neq \emptyset$. Let $\tau_1 \in \{0,1\}^{V_1}$ be the assignment such that $\tau_1(x) = 1$ for every $x \in V_1$ and let $\tau_2 \in \{0,1\}^{V_2}$ be the assignment such that $\tau_2(x) = 1$ for every $x \in V_2$. 

Assume first that $\tau_1 \not \models F_1$. In this case, choose $y \in V_2$ and define $\tau \in \{0,1\}^V$ as $\tau(y) = 0$ and for every $x \in V \setminus \{y\}$, $\tau(x) = 1$. It is easy to see that $\tau \models F_G$ since $C_G$ is satisfied as $\tau(y) = 0$ and if $(u,v) \in E$, then w.l.o.g we have $u \neq y$. Therefore $u \lor v$ is satisfied since $\tau(u) = 1$. However, $\tau \not \models F_1$ since $\tau|_{V_1} = \tau_1$, thus $\tau \not \models Z$ which contradicts the fact that $Z$ computes $F$. If $\tau_2 \not \models F_2$, we have a similar  contradiction. 

Finally, assume that $\tau_1 \models F_1$ and $\tau_2 \models F_2$. Thus, the assignment $\tau = \tau_1 \cup \tau_2 \models F_1 \land F_2 = F$. However, $\tau$ does not satisfy $C_G$, contradictions. 

Thus, the source $s$ of $Z$ is a decision-node. Let $x$ be the variable tested by $s$ of $Z$ and let $v$ be the child of $v$ that corresponds to $x \mapsto 1$. We have that $Z_v$ computes $F_G[x \mapsto 1]$. By Lemma~\ref{lem:fgminusx}, since $|V| \geq 2$, $F_G[x \mapsto 1] = F_{G \setminus x}$. By induction, there exists a source-sink path of decision nodes in $Z_v$ testing every variable in $V \setminus \{x\}$. Thus we have a source-sink path of decision nodes in $Z$ testing every variable in $V$.
\end{proof} 

By combining Corollary~\ref{cor:longpathOBDD} and Lemma~\ref{lem:fglongpath}, we have the following:
\begin{corollary}
\label{cor:fgobdd} Let $G = (V,E)$ be a graph and let $Z$ be a structured decDNNF computing $F_G$. There exists an FBDD of size at most $|Z|$ computing $F_G$.
\end{corollary}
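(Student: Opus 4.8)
The plan is simply to chain the two results that immediately precede this corollary. Since $Z$ is a decDNNF computing $F_G$, Lemma~\ref{lem:fglongpath} applies and provides a source-sink decision path $v_1,\ldots,v_n,v_{n+1}$ in $Z$ testing every variable of $G$, that is, a source-sink decision path on $V = \var(F_G)$. This is exactly the extra hypothesis required by Corollary~\ref{cor:longpathOBDD}: a structured decDNNF on the variable set $X = V$ admitting a source-sink decision path on $X$. Since $Z$ is structured by assumption, Corollary~\ref{cor:longpathOBDD} then yields an FBDD $Z'$ computing the same function as $Z$ --- which is $F_G$ --- of size $O(|Z|)$.

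It remains only to check that the variable sets match up so that the corollary genuinely applies, and to pin down the stated constant. For the first point, $F_G$ is defined over exactly $V$, and the path obtained from Lemma~\ref{lem:fglongpath} tests all of $V$, so we may invoke Corollary~\ref{cor:longpathOBDD} with $X = V$. For the second point, although Corollary~\ref{cor:longpathOBDD} is phrased with an $O(|Z|)$ bound, inspecting its proof gives the sharper estimate claimed here: Lemma~\ref{lem:longpathPL} forces every vtree respected by $Z$ to be linear, and the $\land$-node elimination of Lemma~\ref{lem:pathlikeOBDD} removes each $\land$-node without ever increasing the number of edges, so the resulting FBDD has size at most $|Z|$.

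I do not expect any real obstacle, as the statement is a purely formal corollary; the only point demanding care is the bookkeeping of variable sets --- ensuring that the path guaranteed by Lemma~\ref{lem:fglongpath} really tests all of $X = V$, and not merely a proper subset of the variables, so that the hypothesis of Corollary~\ref{cor:longpathOBDD} is met.
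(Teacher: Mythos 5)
Your proof is correct and is exactly the paper's argument: the paper derives this corollary by combining Lemma~\ref{lem:fglongpath} (which supplies the source-sink decision path on all of $V$) with Corollary~\ref{cor:longpathOBDD}, just as you do. Your extra remark that the bound can be taken as $|Z|$ rather than $O(|Z|)$, by tracing the size-non-increasing transformation in the proof of Lemma~\ref{lem:pathlikeOBDD}, is a sound and welcome piece of bookkeeping that justifies the constant stated in the corollary.
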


\subsection{Lower bound}
\label{sec:sdeclb}

The following has been proven in~\cite{Razgon13}:
\begin{theorem}[\cite{Razgon13}]
\label{thm:razgon}
There exists a constant $c$ such that for every $k \in \mathbb{N}$, there exists an infinite family $\calG_k$ of graphs of treewidth at most $k$  such for every graph $G = (V,E) \in \calG_k$, any FBDD computing $F_G \setminus C_G$ is of size at least $|V|^{ck}$.
\end{theorem}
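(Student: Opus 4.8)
The plan is to prove a size lower bound on any FBDD computing the monotone $2$-CNF $F_G \setminus C_G = \bigwedge_{(x,y) \in E}(x \lor y)$, whose models are exactly the assignments whose $0$-set is an independent set of $G$ (equivalently, whose $1$-set is a vertex cover). Since an FBDD reads each variable at most once on every source-sink path, it is a read-once branching program, so I would not analyse the circuit syntactically but instead go through the standard read-once rectangle/fooling-set machinery. The first task is to exhibit the family $\calG_k$. I would look for bounded-degree graphs of treewidth at most $O(k)$ whose $2$-CNF is hard for \emph{every} variable order, not just for a single order. Concretely, I would use recursively/tree-structured graphs scaled by a factor $k$: the point of a tree-like skeleton is that it can have treewidth $O(1)$ while its pathwidth is $\Theta(\log n)$, and taking $k$ such gadgets pushes treewidth to $\Theta(k)$ while forcing every linear layout of the vertices to have a cut of order $\Theta(k\log n)$. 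Bounded degree is exactly what lets me later turn any set of crossing edges into a crossing \emph{induced} matching of comparable size via Lemma~\ref{lem:extractinduced}.

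With such a graph in hand, the lower bound would run through the read-once rectangle lemma: an FBDD of size $s$ for $F_G \setminus C_G$ yields a cover of its models by at most $s$ combinatorial rectangles, where each rectangle is taken with respect to a partition $(A,B)$ of the variables arising from cutting an accepting computation (each side being tested on a long portion of the path, so the partition is ``balanced''). The key local fact for the $2$-CNF is that a rectangle with respect to $(A,B)$ must, for every edge $(x,y)$ crossing the cut, force one of its endpoints to the constant $1$ across the whole rectangle — otherwise mixing two of its members would set both endpoints of a crossing clause to $0$ and falsify it. So if the partition of a given rectangle is crossed by an induced matching of size $m$, then within that rectangle each of the $m$ matching edges has its $0$-side frozen, and the rectangle is consistent with essentially a single one of the $2^{m}$ ``one-$0$-per-edge'' patterns on that matching.

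The quantitative heart is then to combine these two facts. Using that $\calG_k$ was built so that every balanced partition is crossed by an induced matching of size $m = \Omega(k\log n)$, and choosing a sufficiently rich fixed family $\calA$ of models that realises all one-sided-$0$ patterns on any such crossing matching, each rectangle can cover at most a $2^{-\Omega(k\log n)}$ fraction of $\calA$. Since $s$ rectangles must cover all of $\calA$, I would conclude $s \geq |\calA| \cdot 2^{-\Omega(k\log n)}$, and with $|\calA|$ chosen of size $2^{\Theta(k\log n)}$ this gives $s \geq 2^{\Omega(k\log n)} = n^{\Omega(k)} = |V|^{ck}$, which is the claimed bound. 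The treewidth estimate $\tw(G) \le O(k)$ is then checked directly from the construction (absorbing the constant into $c$), and scaling the gadget sizes produces infinitely many graphs for each fixed $k$.

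The step I expect to be the main obstacle is precisely the one separating FBDD from OBDD: the order of queries is not fixed and may differ along different accepting paths, so there is no single ``middle cut'' to attack. The real work is to show that the construction is robust enough that for \emph{every} read-once computation the induced partition is crossed by a large induced matching, and simultaneously to produce one fixed fooling family $\calA$ whose projection onto whichever matching arises is still fully spread — these two requirements pull against each other and must be reconciled by the global well-linkedness (a bramble of order $\Theta(k)$) of the treewidth-$\Theta(k)$ graph. Turning this topological ``no ordering serialises the gadgets'' statement into a clean combinatorial bound that holds uniformly over all orders, and verifying that the read-once rectangle lemma applies to the cuts actually realised by $Z$ (controlling which variables are tested on each side of a frontier node), is the delicate part of the argument.
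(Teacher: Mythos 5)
The paper does not prove this statement at all: Theorem~\ref{thm:razgon} is imported as a black box from~\cite{Razgon13} (and the companion graph family from~\cite{Razgon14}, cf.\ Theorem~\ref{thm:razgongraphs}), so there is no in-paper proof to compare against. Judged on its own terms, your sketch correctly identifies the architecture of the known argument --- bounded-degree graphs of treewidth $O(k)$ in which \emph{every} linear layout is crossed by a matching of size $\Omega(k\log n)$, reduction of a crossing matching to an induced one via the degree bound, and a fooling/rectangle argument exploiting that a rectangle contained in the models of $\bigwedge_{(x,y)\in E}(x\lor y)$ must freeze one endpoint of each crossing edge to $1$ --- but it does not constitute a proof, because the step you yourself flag as ``the main obstacle'' is the entire content of the theorem and is left unresolved.

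Concretely, the gap is in the passage from ``FBDD of size $s$'' to ``cover of the models by at most $s$ balanced rectangles.'' For an OBDD this is immediate (one global cut, one partition, one rectangle per middle node), but for an FBDD the partition $(A,B)$ depends on the accepting path: two inputs reaching the same middle node may have read different variable sets $A\neq A'$, and the naive grouping by pairs (node, read-set) gives $s\cdot\binom{n}{n/2}$ rectangles, which destroys the bound; nor can two such inputs be freely mixed, since the continuation from the shared node is only guaranteed not to re-read $A'$, not $A$. Resolving this requires either a genuinely uniform fooling family whose members all induce the \emph{same} read-set at the chosen node (which must then be argued to exist for every FBDD simultaneously), or a per-node subfunction-counting argument tailored to the read-once property --- this is precisely the technical work done in~\cite{Razgon13}, and your proposal defers it rather than supplying it. A secondary unproved ingredient is the existence of the graph family itself (treewidth $\leq k$, bounded degree, and an $\Omega(k\log n)$ crossing matching for every vertex order); your appeal to ``tree-like gadgets with logarithmic pathwidth scaled by $k$'' is the right intuition but is exactly the content of Theorem~\ref{thm:razgongraphs}, again cited rather than proved.
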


We will Theorem~\ref{thm:razgon} to prove our lower bound. We start by showing that $F_G$ is not easier to compute than $F_G \setminus C_G$ with FBDD:
\begin{lemma}
\label{lem:fgfromraz}
  Let $G = (V,E)$ be a graph and $Z$ be an FBDD computing $F_G$. Then, there exists an FBDD $Z'$ of size at most $|V| \cdot |Z|$ computing $F_G \setminus C_G$.
\end{lemma}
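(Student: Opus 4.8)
The plan is to compare the two Boolean functions directly and then perform a small surgery on $Z$. Since $F_G \setminus C_G = \bigwedge_{(x,y) \in E}(x \vee y)$, an assignment $\tau$ satisfies $F_G \setminus C_G$ exactly when it satisfies every edge clause, whereas $\tau \models F_G$ additionally requires $\tau \models C_G$, i.e.\ that at least one variable be set to $0$. The only assignment satisfying every edge clause while falsifying $C_G$ is the all-ones assignment $\mathbf 1$, and $\mathbf 1$ does satisfy every edge clause. Hence the satisfying assignments of $F_G \setminus C_G$ are exactly those of $F_G$ together with the single extra assignment $\mathbf 1$. The whole task thus reduces to modifying $Z$ so that it additionally accepts $\mathbf 1$, without changing its value on any other input and without blowing up the size.

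Next I would isolate the computation path of $\mathbf 1$. Let $P = (u_0, \ldots, u_m)$ be the path obtained from the root by always following the $1$-edge; since $Z$ is read-once this path has length at most $|V|$ and ends at a sink, and because $\mathbf 1 \not\models F_G$ this sink is a $0$-sink. The key claim is that $P$ tests every variable of $V$. Indeed, if some $z \in V$ were never tested on $P$, consider the assignment $\tau$ with $\tau(z) = 0$ and $\tau(w) = 1$ for all $w \neq z$: it agrees with $\mathbf 1$ on every variable tested along $P$, so it reaches the same $0$-sink, giving $F_G(\tau) = 0$; but $\tau$ has a single $0$, so it satisfies $C_G$ and every edge clause (no edge can have both endpoints equal to $0$), contradicting $F_G(\tau) = 0$. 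Thus $u_0, \ldots, u_{m-1}$ are decision nodes testing all $|V|$ variables and $u_m$ is the $0$-sink.

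Then I would build $Z'$ by splicing a private copy of $P$ in front of $Z$. Concretely, keep $Z$ intact, add fresh decision nodes $u_0', \ldots, u_{m-1}'$ testing the same variables, make $u_0'$ the new root, send the $1$-edge of $u_i'$ to $u_{i+1}'$ for $i < m-1$, send the $1$-edge of $u_{m-1}'$ to the $1$-sink (this is the single reroute that adds $\mathbf 1$), and send the $0$-edge of each $u_i'$ to the target of the $0$-edge of $u_i$ in $Z$. Correctness is then routine: $\mathbf 1$ runs along the copied path to the $1$-sink; any $\tau \neq \mathbf 1$ sets some tested variable to $0$, and at the first such variable it leaves the copy through a $0$-edge into $Z$ at exactly the node it would have reached in $Z$, so $Z'(\tau) = Z(\tau) = F_G(\tau) = (F_G \setminus C_G)(\tau)$, the last equality using $\tau \models C_G$. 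The read-once property survives because, by read-once-ness of $Z$, no descendant in $Z$ of the merge node re-tests any variable on the prefix of $P$. Since we only added $m \le |V|$ nodes and $2m$ edges, we get $|Z'| \le |Z| + 2|V| \le |V| \cdot |Z|$.

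The main obstacle, and the crux of the argument, is the structural claim that the $\mathbf 1$-path tests every variable. Without it the splice is simply wrong: if $P$ skipped a variable $z$, an input that follows $P$ entirely (all tested variables set to $1$) but sets two untested, adjacent variables to $0$ would be incorrectly routed to the $1$-sink even though it violates an edge clause. Establishing this claim via the single-flip argument above is what makes the in-place surgery sound; the remaining verification of correctness, of the read-once invariant, and of the size bound is then mechanical.
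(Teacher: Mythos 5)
Your proof is correct, but it takes a genuinely different route from the paper's. The paper makes no structural claim about $Z$ at all: it fixes an arbitrary order $x_1,\ldots,x_n$ of $V$, builds a fresh spine of decision nodes $\alpha_1,\ldots,\alpha_n$, sends the $1$-edge of $\alpha_i$ to $\alpha_{i+1}$ (and that of $\alpha_n$ to a $1$-sink), and sends the $0$-edge of $\alpha_i$ into a \emph{disjoint} copy of $Z$ conditioned on the partial assignment $\tau_i$ that sets $x_1,\ldots,x_{i-1}$ to $1$ and $x_i$ to $0$. Correctness then follows from the same observation you start from (the models of $F_G \setminus C_G$ are those of $F_G$ plus the all-ones assignment), and the $n$ conditioned copies account for the $|V|\cdot|Z|$ bound. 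You instead prove a structural fact about $Z$ itself --- that its all-ones computation path is a source-sink decision path testing every variable, established by the single-flip argument, which is the same idea the paper uses for a different purpose in Lemma~\ref{lem:fglongpath} --- and then splice a single rerouted copy of that path in front of $Z$, reusing $Z$'s own $0$-successors instead of making $n$ copies. What your approach buys is a much sharper bound, $|Z|+O(|V|)$ rather than $|V|\cdot|Z|$, at the price of the extra structural lemma; what the paper's approach buys is that it needs nothing about the shape of $Z$. One small point to tidy: the closing inequality $|Z|+2|V|\le |V|\cdot|Z|$ deserves a word --- for $|V|\ge 2$ the function $F_G$ depends on every variable, so $Z$ has at least $|V|$ decision nodes and hence at least $2|V|$ edges, which suffices, and for $|V|=1$ the target function is constant and the claim is trivial.
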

\begin{proof}[Proof.]
  Let $(x_1,\ldots,x_n)$ be an arbitrary ordering of $V$. We let $\tau_i : \{x_1, \ldots, x_i\} \rightarrow \{0,1\}$ be defined as $\tau_i(x_j) = 1$ for $j < i$ and $\tau_i(x_i) = 0$. 

We create an FBDD $Z'$ as follows: $Z'$ has $n$ decision nodes $\alpha_i$, $\alpha_i$ testing variable $x_i$ and $n$ disjoint FBDD, each of them computing $Z[\tau_i]$, which can be done with an FBDD of size at most $|Z|$ by projecting $Z$ on $\tau_i$. The root of $Z'$ is $\alpha_1$. 

For $i<n$, the outgoing edge of $\alpha_i$ that is labelled by $1$ goes into $\alpha_{i+1}$ and the outgoing edge of labelled by $0$ goes into the FBDD computing $Z[\tau_i]$. The outgoing edge of $\alpha_n$ labelled with $1$ goes into a $1$-source. 

It is easy to check that $|Z'| \leq |V| \cdot |Z|$ and that $Z'$ computes $F_G \setminus C_G$. Indeed, let $\tau$ be a satisfying assignment of $F_G \setminus C_G$. If $\tau$ assigns every variable to $1$. In this case, the corresponding path in $Z'$ goes through every $\alpha_i$ and ends up in a $1$-sink. Otherwise, let $x_i$ be the smallest variables that is set to $0$ by $\tau$. In this case, the path corresponding to $\tau$ goes through $\alpha_1, \ldots, \alpha_i$ and then into the FBDD computing $Z[\tau_i]$. However, it is readily verified that $\tau \simeq \tau_i$. Moreover, since $\tau \models C_G$, $Z[\tau_i]$ is satisfied by $\tau$ if and only if $\tau$ satisfies $F_G \setminus C_G$. 
\end{proof}

Moreover, the incidence treewidth of $F_G$ is directly related to the treewidth of $G$:
\begin{lemma}
\label{lem:itwfg}
Let $G = (V,E)$ be a graph of treewidth $k$. We have $\itw(F_G) \leq k+1$.
\end{lemma}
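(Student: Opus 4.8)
The plan is to start from a width-$k$ tree decomposition of $G$ and transform it into a tree decomposition of the incidence graph of $F_G$, paying exactly one extra unit of width to accommodate the ``global'' clause $C_G$. Recall that the incidence graph $I$ of $F_G$ is bipartite: one side is the set of variables $V$, and the other side is the set of clauses, namely $C_G$ together with one clause $C_e = (x \lor y)$ for each edge $e = \{x,y\} \in E$. The neighbours of $C_G$ in $I$ are \emph{all} of $V$, whereas the only neighbours of $C_e$ are $x$ and $y$; clause vertices are never adjacent to one another.

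First I would fix a tree decomposition $(T, \{B_t\}_{t \in V_T})$ of $G$ of width $k$, so that $|B_t| \leq k+1$ for every $t$. The construction then proceeds in two moves. The first move handles $C_G$: put $B_t' = B_t \cup \{C_G\}$ for every $t \in V_T$, raising each bag to size at most $k+2$. Since $C_G$ now lies in every bag, its set of occurrences is all of $T$ (hence connected), and every incidence edge $\{x, C_G\}$ is covered because any bag originally containing $x$ now also contains $C_G$. The second move handles the edge clauses: for each $e = \{x,y\} \in E$, the edge-coverage axiom for $G$ provides a node $t(e)$ with $\{x,y\} \subseteq B_{t(e)}$, and I would attach a fresh leaf $\ell_e$ to $t(e)$ with bag $B_{\ell_e} = \{x, y, C_e\}$. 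These leaf bags have size $3$, and they occur only when $E \neq \emptyset$, which forces $\tw(G) = k \geq 1$ and hence $3 \leq k+2$; so no bag exceeds size $k+2$. Each incidence edge $\{x, C_e\}$ and $\{y, C_e\}$ is covered by $B_{\ell_e}$, and $C_e$ occurs in this single leaf only, so its occurrence set is trivially connected.

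The last thing I would verify is the connectivity axiom for the variables, which is the only step needing a little care. The occurrence set of a variable $x$ is the original connected subtree $\{t : x \in B_t\}$ together with the leaves $\ell_e$ for edges $e \ni x$; but each such $\ell_e$ is attached directly to a node $t(e)$ that contains $x$, and attaching leaves to nodes of an already-connected subtree keeps it connected. Hence the resulting tree is a valid tree decomposition of $I$ of width at most $k+1$, which yields $\itw(F_G) \leq k+1$.

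The main (though mild) obstacle is precisely the global clause $C_G$: because it is adjacent to \emph{every} variable, no placement of $C_G$ can avoid spending one extra unit of width, and this is exactly the ``$+1$'' appearing in the statement. Everything else is a routine check of the three tree-decomposition axioms, with the edge clauses absorbed cheaply into size-$3$ leaf bags hung off the nodes already witnessing the corresponding edges of $G$.
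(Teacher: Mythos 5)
Your proof is correct and follows essentially the same route as the paper: the key step in both is to absorb the global clause $C_G$ by adding it to every bag of a width-$k$ decomposition, which accounts for the $+1$. The only difference is that where you handle the binary edge clauses by hand (hanging size-$3$ leaf bags off nodes already witnessing the corresponding edges of $G$), the paper instead observes that the primal graph of $F_G \setminus C_G$ is $G$ and invokes Theorem~\ref{thm:ptwvsitw} to obtain a width-$k$ incidence decomposition of $F_G \setminus C_G$ before adding $C_G$.
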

\begin{proof}[Proof (sketch).]
  It is easy to see that the primal graph of $F_G \setminus C_G$ is $G$ so the primal treewidth of $F_G \setminus C_G$ is $k$. Now, by Theorem~\ref{thm:ptwvsitw}, the incidence treewidth of $F_G \setminus C_G$ is at most $k$. Let $T$ be a tree decomposition for the incidence graph of $F_G \setminus C_G$. Construct a new tree decomposition by adding $C_G$ in every bag of $T$. It is easy to check that this is a tree decomposition of the incidence graph of $F_G$ of width $k+1$.
\end{proof}

\begin{proof}[Proof (of Theorem~\ref{thm:sdeclb}).]
  Let $\calG_k$ the family of graphs given by Theorem~\ref{thm:razgon} and let $\calF_k = \{F_G \mid G \in \calG_k \}$. By Lemma~\ref{lem:itwfg}, $\calF_k$ is an infinite family of formulas of incidence treewidth $k$. Now let $F \in \calF_k$ and let $G = (V,E) \in \calG_k$ the graph such that $F = F_G$. By Theorem~\ref{thm:razgon}, any FBDD computing $F \setminus C_G$ is of size at least $|V|^{ck}$. We conclude by using Lemma~\ref{lem:fgfromraz} that implies that the smallest FBDD computing $F_G$ is of size at least $|V|^{ck-1}$.
\end{proof}

\section{Lower bound for $\adobdd$}
\label{sec:aobdd}

This section is dedicated to the proof of our second lower bound:
\begin{theorem}
\label{thm:aobddlb}
  For every $k$, there exists an infinite family $\calF_k$ of CNF of incidence treewidth $k$ having no $\adobdd$ smaller than $n^{\Omega(k)}$ where $n = |\var(F)|$.
\end{theorem}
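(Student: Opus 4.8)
The plan is to keep the instances $F_G$ from Section~\ref{sec:sdec}, but to apply them to a family of bounded-degree graphs $G$ that have treewidth $k$ and yet \emph{pathwidth} $\Omega(k\log n)$, where $n=|V|$; the gap between these two parameters is exactly what should separate a fixed-order model from FPT behaviour, since for $\adobdd$ the relevant quantity is ``a large separator in every linear order''. By Lemma~\ref{lem:itwfg} each such $F_G$ still satisfies $\itw(F_G)\le k+1$, so the family lies in the right class. Good candidates are ``thickened'' complete binary trees (the complete binary tree itself for $k=1$, and a width-$k$ analogue, e.g. its product with a path or clique on $\Theta(k)$ vertices, in general), for which the treewidth is $\Theta(k)$, the pathwidth is $\Theta(k\log n)$, and the degree is bounded.

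Now fix an $\adobdd$ $Z$ for $F_G$ with variable order $<$, and let $A$ be a prefix of $<$ with complement $B$. Since $A$ is an initial segment, feeding an assignment $\sigma\in\{0,1\}^A$ into $Z$ is deterministic: at a decision node on an $A$-variable we follow $\sigma$, and at a $\land$-node we spawn two threads on disjoint variable sets; once all of $A$ has been consumed, every surviving thread sits at a node testing a $B$-variable. Writing $R(\sigma)$ for that set of nodes, decomposability gives $F_G[\sigma]=\bigwedge_{w\in R(\sigma)}f_w$ with the $f_w$ on pairwise disjoint subsets of $B$; iterating through the deeper $\land$-nodes shows that \emph{every irreducible factor} of $F_G[\sigma]$ — every factor that does not split as a conjunction over disjoint nonempty variable sets — is computed at a single node of $Z$. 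Consequently $|Z|$ is at least the number of distinct irreducible factors occurring among the subfunctions $F_G[\sigma]$, $\sigma\in\{0,1\}^A$. This observation is what neutralises the $\land$-nodes, and plays for $\adobdd$ the role that the linear-vtree analysis played for structured decDNNF.

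To lower bound that number I would exploit the large pathwidth. For every order $<$ some prefix cut $(A,B)$ has a separator of size $\Omega(k\log n)$; as $G$ has bounded degree this yields a matching of that size between $A$ and $B$, and Lemma~\ref{lem:extractinduced} extracts an induced matching $M'=\{(a_1,b_1),\dots,(a_m,b_m)\}$ of size $m=\Omega(k\log n)$. For $S\subseteq[m]$ let $\sigma_S\in\{0,1\}^A$ send $a_i\mapsto 0$ for $i\in S$ and every other $A$-variable to $1$. Then $C_G$ holds and $F_G[\sigma_S]$ is an edge-cover formula on $B$ in which, since $M'$ is induced, the variable $b_i$ is forced to $1$ exactly when $i\in S$; so distinct $S$ produce distinct residual formulas. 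If a constant fraction of the corresponding residual graphs stay connected, the functions $F_G[\sigma_S]$ are themselves irreducible factors and pairwise distinct, yielding $2^{\Omega(m)}=n^{\Omega(k)}$ distinct irreducible factors and hence $|Z|\ge n^{\Omega(k)}$.

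The hard part is precisely this connectivity requirement, which is where decomposability bites: an induced matching is a disjoint union of edges, and a $\land$-node would split it for free, so the bound collapses unless the chosen cut keeps an $\Omega(k\log n)$-sized chunk of the residual graph connected for the restrictions $\sigma_S$. The technical core is therefore the graph construction: I must ensure that, for \emph{every} order, the separator guaranteed by the large pathwidth attaches to a single connected region $K\subseteq B$ disjoint from $\{b_1,\dots,b_m\}$, so that $K\cup\{b_i:i\notin S\}$ stays connected and the residual edge-cover functions remain irreducible and all distinct. Reconciling this with treewidth $k$ (small separators everywhere) and pathwidth $\Omega(k\log n)$ (a large separator in every order) is the delicate balance on which the lower bound rests, and I expect this combinatorial bookkeeping, rather than the circuit-level step of the previous paragraph, to be the main obstacle.
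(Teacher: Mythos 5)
There is a fatal gap, and it is located exactly where you suspect: the instances. You keep the formulas $F_G$ from Section~\ref{sec:sdec}, but $F_G$ provably admits an $\adobdd$ of size $2^{O(k)}\cdot\mathrm{poly}(n)$ for \emph{every} graph $G$ of treewidth $k$, regardless of its pathwidth, so no choice of $G$ can yield the claimed $n^{\Omega(k)}$ bound. The upper bound is the construction of Lemma~\ref{lem:fgfromraz} run in reverse: order the variables by a DFS traversal of a width-$k$ tree decomposition and build a ``spine'' of $n$ decision nodes followed by the all-ones assignment; the moment some variable is set to $0$, the clause $C_G$ is satisfied and disappears, and the residual formula is the monotone $2$-CNF $F_G\setminus C_G$, which has primal treewidth $k$ and hence an FPT-size $\adobdd$ for that order. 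This also shows concretely why the connectivity requirement you isolate as ``the technical core'' cannot be met: for the DFS order, every connected component of the subgraph induced by a suffix $B$ attaches to the prefix $A$ only through the $\le k+1$ vertices of one bag, so any induced matching across the cut whose $B$-endpoints lie in a single component of $G[B]$ has size at most $k+1$, not $\Omega(k\log n)$. Large pathwidth forces a large crossing matching, but it cannot force its $B$-side into one component; the $\land$-nodes then split the residual edge-cover formula into factors each depending on $O(1)$ bits of your set $S$, and the count of distinct irreducible factors collapses to $\mathrm{poly}(n)\cdot 2^{O(k)}$. (Your circuit-level claim that every irreducible factor is computed at a single node also needs care --- a node may compute a reducible function without a $\land$-node at its root --- but that part is repairable along the lines of Lemmas~\ref{lem:maxinode} and~\ref{lem:reachable}; the choice of instances is not.)

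The paper escapes this by changing the formula, not the graph: it uses $F^2_G$ on two copies $V^1,V^2$ of the vertex set, with crossing clauses $u^1\vee v^2$, $u^2\vee v^1$ for each edge and \emph{two} long negative clauses $C_1,C_2$. Whatever the order, at the critical cut one of the two long clauses is still ``alive,'' and Lemma~\ref{lem:extractnode} shows it forces a single maximal node whose variable set contains essentially all of $V^2\cap[>u]$ --- this is precisely the connectivity/irreducibility that $F_G$ cannot supply. The matching machinery you invoke (Theorem~\ref{thm:razgongraphs}, Lemma~\ref{lem:extractinduced}, the $2^{|M|}$ assignments indexed by subsets) is then essentially the paper's Theorem~\ref{thm:genericlb}, so your second and third paragraphs are on the right track; but they must be run on $F^2_G$, since on $F_G$ the statement you are trying to prove is false.
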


Unfortunately, contrary to the proof of Section~\ref{sec:sdec}, we cannot directly transfer a lower bound for FBDD. We have to generalise the technique of~\cite{Razgon13} to this class of circuits. Moreover, it can be shown that $F_G$ has $\adobdd$ of size FPT in the treewidth of $G$, thus we cannot use that same instances as in Section~\ref{sec:sdec}. Fortunately, we will see that only adding two long clauses is enough for proving such a lower bound.

We start by proving in Section~\ref{sec:propobdd} some general results concerning $\adobdd$. We then describe how we construct our instances in Section~\ref{sec:geninst}. Finally, Section~\ref{sec:inctw} gives a proof of Theorem~\ref{thm:aobddlb}.

\subsection{Properties of $\adobdd$}
\label{sec:propobdd}

Given a finite set $X$, $<$ an order on $X$ and $x \in X$, we denote by $[\leq x] = \{y \in X \mid y \leq x\}$. We similarly define $[<x]$, $[\geq x]$, $[>x]$. 

In the rest of this section, we fix $X$ a set of variables, $<$ an order on $X$, $Z$ an $\adobdd_<$ $Z$ on variables $X$, $u \in X$ and $\tau : [\leq u] \rightarrow \{0,1\}$. A node $\alpha$ of $Z$ reached by $\tau$ is {\em maximal} if $\alpha$ is the only node of $Z_\alpha$ that is reached by $\tau$. We will show that we can nicely characterise $Z[\tau]$ in terms of maximal nodes.


\begin{lemma}
\label{lem:maxinode}
Let $\alpha, \beta$ be two nodes of $Z$. Let $P_\alpha$ and $P_\beta$ be two path from the root of $Z$ to $\alpha$ and $\beta$ respectively compatible with $\tau$.
\begin{itemize}
\item If $\beta$ is in $Z_\alpha$ then $P_\beta$ contains $\alpha$.
\item If $\beta$ is not in $Z_\alpha$ and $\alpha$ is not in $Z_\beta$ then $\var(Z_\alpha) \cap \var(Z_\beta) = \emptyset$.
\end{itemize}
\end{lemma}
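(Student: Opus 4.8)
The plan is to analyse both statements through the two $\tau$-compatible paths $P_\alpha$ and $P_\beta$, and to locate their \emph{point of divergence}, namely the last node $\gamma$ of the longest common prefix of $P_\alpha$ and $P_\beta$ (reading from the root). Such a $\gamma$ always exists since both paths start at the root. Either one path is a prefix of the other, or the two paths leave $\gamma$ through two distinct edges into two distinct successors. The crux of the whole argument is a single structural observation: whenever the paths genuinely split, the divergence node $\gamma$ cannot be a decision node, and hence must be a $\land$-node.

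First I would establish that observation. Suppose $\gamma$ is a decision node testing a variable $v$ and that both $P_\alpha$ and $P_\beta$ leave $\gamma$ (so $\gamma$ is the endpoint of neither path). Because each path is compatible with $\tau$, the edge each path takes out of $\gamma$ must be the one labelled by $\tau(v)$, and compatibility also forces $v \in [\leq u]$. But a decision node has exactly one outgoing edge with each label, so both paths are pushed onto the \emph{same} edge and reach the \emph{same} successor, contradicting divergence at $\gamma$. Since $\gamma$ is not a sink either (it has outgoing edges on the paths), it is a $\land$-node; write $\gamma_1, \gamma_2$ for its two children, so that $P_\alpha$ continues through $\gamma_1$ and $P_\beta$ through $\gamma_2$ with $\gamma_1 \neq \gamma_2$. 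After the normalisation of Section~\ref{sec:decdnnf} the subcircuits $Z_{\gamma_1}$ and $Z_{\gamma_2}$ are vertex-disjoint, and by decomposability $\var(Z_{\gamma_1}) \cap \var(Z_{\gamma_2}) = \emptyset$.

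With this in hand both bullets follow. For the second bullet, the hypotheses $\beta \notin Z_\alpha$ and $\alpha \notin Z_\beta$ rule out either path being a prefix of the other (a prefix relation would place $\alpha$ on $P_\beta$ or $\beta$ on $P_\alpha$, hence one node inside the other's subcircuit), so the paths genuinely diverge at a $\land$-node $\gamma$. Since $\alpha$ is reachable from $\gamma_1$ and $\beta$ from $\gamma_2$, we get $\var(Z_\alpha) \subseteq \var(Z_{\gamma_1})$ and $\var(Z_\beta) \subseteq \var(Z_{\gamma_2})$, whence $\var(Z_\alpha) \cap \var(Z_\beta) = \emptyset$. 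For the first bullet, assume $\beta \in Z_\alpha$ and examine the same trichotomy: if $P_\alpha$ is a prefix of $P_\beta$ then $\alpha$ lies on $P_\beta$ and we are done; if $P_\beta$ is a prefix of $P_\alpha$ then $\beta$ lies on $P_\alpha$ while also being reachable from $\alpha$, so acyclicity of the DAG forces $\beta = \alpha$ and again $\alpha \in P_\beta$; and genuine divergence at a $\land$-node is impossible, since it would force $\beta$ to lie in $Z_{\gamma_2}$ and, because $\beta \in Z_\alpha$ with $\alpha$ reachable from $\gamma_1$, also in the disjoint subcircuit $Z_{\gamma_1}$, a contradiction.

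I expect the main obstacle to be not the central observation itself but the careful bookkeeping of the degenerate cases: ruling out that one path is a prefix of the other, invoking acyclicity when an ancestor is simultaneously a descendant, and making sure the normalisation is used so that $Z_{\gamma_1}$ and $Z_{\gamma_2}$ are genuinely vertex-disjoint, so that no common node (such as a shared sink) can sneak in and break the argument. Note also that the first bullet is logically the more delicate one and is best proved only after the $\land$-node observation and its disjointness consequence are already available.
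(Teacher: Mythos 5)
Your proof is correct and follows essentially the same strategy as the paper's: locate the divergence node $\gamma$ of the two $\tau$-compatible paths, argue that compatibility with $\tau$ rules out $\gamma$ being a decision node so any genuine split happens at a decomposable $\land$-node, and then exploit the disjointness of $Z_{\gamma_1}$ and $Z_{\gamma_2}$ for both bullets. Your treatment is in fact slightly more careful than the paper's, since you explicitly dispose of the prefix and $\alpha=\beta$ degeneracies via acyclicity, which the paper leaves implicit.
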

\begin{proof}
First, assume that $\beta$ is in $Z_\alpha$ and that $\alpha$ is not in $P_\beta$. Since $P_\alpha$ and $P_\beta$ both start from the root, there must exist a node $\gamma$ where they split, that is, $\gamma$ is a node with successor $\gamma_1$ and $\gamma_2$ such that $\gamma$ is in both $P_\alpha$ and $P_\beta$ but $\gamma_1$ is only in $P_\alpha$ and $\gamma_2$  on $P_\beta$. We claim that this is not possible. Indeed, by definition, $\gamma$ is not a sink. Moreover $\gamma$ is not an $\land_d$-node. Indeed, $\beta$ is in both $Z_{\gamma_1}$ and $Z_{\gamma_2}$ and if $\gamma$ were an $\land$-node, $Z_{\gamma_1}$ and $Z_{\gamma_2}$ would be disjoint. Finally, assume that $\gamma$ is a decision node on variable $v$. Since $(\gamma,\gamma_1)$ is in $P_\alpha$ and $P_\alpha$ is compatible with $\tau$, we have $v \in [\leq u]$ and $(\gamma,\gamma_1)$ is labelled with $\tau(v)$. But since $(\gamma,\gamma_2)$ is in $P_\beta$, $(\gamma,\gamma_2)$ is also labelled with $\tau(v)$ which contradicts the definition of decision nodes. 

Now assume that $\beta$ is not in $Z_\alpha$ and $\alpha$ is not in $Z_\beta$. Thus $P_\alpha$ and $P_\beta$ split at a node $\gamma$ defined as in the previous paragraph. For the same reasons as before, $\gamma$ is not a decision node, nor a sink. Thus $\gamma$ is an $\land$-node, meaning that $\var(Z_{\gamma_1}) \cap \var(Z_{\gamma_2}) = \emptyset$. The result follows since $\var(Z_{\alpha}) \subseteq \var(Z_{\gamma_1})$ and $\var(Z_{\beta}) \subseteq \var(Z_{\gamma_2})$.
\end{proof}

It follows from Lemma~\ref{lem:maxinode}:
\begin{lemma}
\label{lem:reachable}
Let $W$ be the set of nodes $\alpha$ reached by $\tau$ that are maximal. We have: 
\[ Z[\tau] \equiv \bigwedge_{\alpha \in W} Z_\alpha. \]
And for every $\alpha,\beta \in W$, if $\alpha \neq \beta$ then $\var(Z_\alpha) \cap \var(Z_\beta) = \emptyset$.
\end{lemma}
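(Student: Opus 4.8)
The plan is to handle the two assertions separately. The disjointness claim I would derive immediately from Lemma~\ref{lem:maxinode}: given distinct $\alpha,\beta\in W$, both are reached by $\tau$, so fix $\tau$-compatible paths $P_\alpha,P_\beta$ from the root. If $\beta$ lay in $Z_\alpha$ it would be a node of $Z_\alpha$ other than $\alpha$ reached by $\tau$, contradicting the maximality of $\alpha$; by symmetry $\alpha\notin Z_\beta$ as well. The second bullet of Lemma~\ref{lem:maxinode} then yields $\var(Z_\alpha)\cap\var(Z_\beta)=\emptyset$ directly.

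For the functional equivalence I would first pin down exactly which nodes are maximal. A reached $\land$-node has both children reached (its out-edges are unlabelled), and a reached decision node testing some $v\leq u$ has the child on its $\tau(v)$-edge reached; neither can therefore be maximal. Conversely, a reached sink is trivially maximal, and a reached decision node $\alpha$ testing some $v>u$ is maximal: any $\tau$-compatible path reaching a proper descendant of $\alpha$ would, by the first bullet of Lemma~\ref{lem:maxinode}, pass through $\alpha$ and then cross an out-edge of a decision node on a variable $>u$, which is impossible for a $\tau$-compatible path. Hence $W$ consists precisely of the reached sinks and the reached decision nodes testing variables $>u$. I would also record that, by the order $<$, every decision node inside such a $Z_\alpha$ tests a variable $\geq v>u$, so $\var(Z_\alpha)\subseteq[>u]$ and each $Z_\alpha$ is a function of the free variables $[>u]$.

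The heart of the argument is a frontier decomposition. Fix $\sigma\in\{0,1\}^{[>u]}$ and put $\rho=\tau\cup\sigma\in\{0,1\}^X$; I would prove that a sink $s$ is reached by $\rho$ in $Z$ if and only if there is some $\alpha\in W$ with $s\in Z_\alpha$ and $s$ reached by $\sigma$ inside $Z_\alpha$. For the forward direction, take a $\rho$-compatible root-to-$s$ path $P$ and let $\alpha$ be its first node that is a sink or a decision node on a variable $>u$. The prefix of $P$ up to $\alpha$ crosses only $\land$-edges and decision edges on variables $\leq u$, so it is $\tau$-compatible and places $\alpha$ in $W$; the suffix of $P$ from $\alpha$ stays in $Z_\alpha$ and, by the order, crosses only decision edges on variables $>u$, so it is $\sigma$-compatible. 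For the converse I would simply concatenate a $\tau$-compatible path from the root to $\alpha$ with a $\sigma$-compatible path from $\alpha$ to $s$ inside $Z_\alpha$: since $\rho$ agrees with $\tau$ below $u$ and with $\sigma$ above $u$, the concatenation is $\rho$-compatible.

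Given this, the conclusion is routine: $Z[\tau](\sigma)=Z(\rho)$ equals $1$ iff no $0$-sink is reached by $\rho$, which by the frontier decomposition happens iff for every $\alpha\in W$ no $0$-sink of $Z_\alpha$ is reached by $\sigma$, i.e.\ iff $Z_\alpha(\sigma)=1$ for all $\alpha\in W$; this is precisely $\bigl(\bigwedge_{\alpha\in W}Z_\alpha\bigr)(\sigma)=1$. I expect the frontier decomposition to be the only delicate point, the subtlety being to split each computation path at the right node and to check compatibility with $\tau$ on the prefix and with $\sigma$ on the suffix, using both the order constraint of the $\adobdd$ and the reachability structure supplied by Lemma~\ref{lem:maxinode}.
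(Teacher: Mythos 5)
Your proof is correct and follows essentially the same route as the paper's: both arguments split each $(\tau\cup\sigma)$-compatible root-to-sink path at the frontier node where compatibility with $\tau$ ends (which you identify, as the paper does implicitly, with the reached sinks and reached decision nodes on variables $>u$), and both derive the disjointness claim from Lemma~\ref{lem:maxinode} via maximality. The only cosmetic difference is that you package the path-splitting as a single biconditional ``frontier decomposition'' and make the characterisation of $W$ explicit up front, whereas the paper proves the two implications of the equivalence separately.
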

\begin{proof}
  Let $\tau' : [>u] \rightarrow \{0,1\}$. We prove that $\tau' \models Z[\tau]$ if and only if $\tau' \models Z_\alpha$ for every $\alpha \in W$. First, assume that $\tau' \models Z[\tau]$, that is, $\tau \cup \tau' \models Z$. Let $\alpha \in W$ and $\sigma$ be a sink of $Z_\alpha$ reached by $\tau'$. By definition, $\alpha$ is reached by $\tau \cup \tau'$ in $Z$, thus, $\sigma$ is also reached by $\tau \cup \tau'$ in $Z$. Thus, since $\tau \cup \tau' \models Z$, $\sigma$ is labelled with $1$. 

Now assume that for every $\alpha \in W$, $\tau' \models Z_\alpha$. Let $\sigma$ be a sink reached by $\tau \cup \tau'$ in $Z$ by a path $P$. Let $\alpha$ be the last node on $P$ that is reached by $\tau$. We claim that $\alpha$ is maximal. Indeed, if $\alpha$ is a sink then it is clearly maximal. Otherwise, let $\alpha_1$ and $\alpha_2$ be the successor of $\alpha$ and assume that $\alpha_1$ is in $P$. If $\alpha$ is an $\land$-node, then $\alpha_1$ would be reached by $\tau$ contradicting the maximality of $\alpha$ on $P$. If $\alpha$ is a decision node for a variable $v \leq u$, then since $P$ is compatible with $\tau \cup \tau'$, $(\alpha,\alpha_1)$ is labelled with $(\tau \cup \tau')(v) = \tau(v)$, that is, $\alpha_1$ is reached by $\tau$, contradicting again the maximality of $\alpha$. Thus $\alpha$ is a decision node on a variable $v > u$. Now, assume there exists $\beta \neq \alpha$ in $Z_\alpha$ that is reached by $\tau$ and let $P_\beta$ be a path compatible with $\tau$ from the root of $Z$ to $\beta$. By Lemma~\ref{lem:maxinode}, $P_\beta$ contains $\alpha$ and then, it contains either $\alpha_1$ or $\alpha_2$. But since $\alpha$ is a decision node on $v > u$, $\tau$ does not assign $v$ and it contradicts the fact that $P_\beta$ is compatible with $\tau$. Thus $\alpha \in W$. By assumption. $\tau' \models Z_\alpha$, that is $\tau \cup \tau' \models Z$. 

Finally, let $\alpha,\beta$ be two distinct nodes of $W$. By definition of $W$, $\alpha$ is not in $Z_\beta$ and $\beta$ is not in $Z_\alpha$ and both are reached by $\tau$. By Lemma~\ref{lem:maxinode}, $\var(Z_\alpha) \cap \var(Z_\beta) = \emptyset$.
\end{proof}

Finally, we will use the following lemma:
\begin{lemma}
  \label{lem:rectangle} Let $Z$ be a $\adobdd_<$, $\tau : \var(Z) \rightarrow \{0,1\}$ such that $\tau \models Z$ and $\alpha$ a node reached by $\tau$. Then for every $\tau': \var(Z_\alpha) \rightarrow \{0,1\}$ such that $\tau' \models Z_\alpha$, we have $\tau'' := \tau|_{\var(Z) \setminus \var(Z_\alpha)} \cup \tau' \models Z$.
\end{lemma}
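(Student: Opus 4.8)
The plan is to set $W := \var(Z_\alpha)$ and to argue that $\tau''$ cannot reach any $0$-sink, via a decomposition (``rectangle'') argument around $\alpha$. The starting observation is that, because $Z$ respects the order $<$, every decision variable lying on a directed path reaching $\alpha$ from the root is strictly smaller than every variable of $W$: from such a decision node there is a directed path through $\alpha$ to any decision node of $Z_\alpha$, so the order constraint forces the inequality. In particular no variable tested on a root-to-$\alpha$ path belongs to $W$, and since $\tau''$ coincides with $\tau$ outside $W$, any root-to-$\alpha$ path compatible with $\tau$ is also compatible with $\tau''$. Thus $\alpha$ is reached by $\tau''$; I would fix a path $P_\alpha$ from the root to $\alpha$ compatible with $\tau''$. (If $W = \emptyset$ then $\tau'' = \tau$ and there is nothing to prove.)

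First I would suppose, for contradiction, that $\tau'' \not\models Z$, so that some $0$-sink $\sigma$ is reached by $\tau''$ along a path $P$, and split on whether $P$ passes through $\alpha$. If it does, then the suffix of $P$ from $\alpha$ to $\sigma$ lies entirely in $Z_\alpha$ and its decision variables all belong to $W$, on which $\tau''$ agrees with $\tau'$; hence this suffix is compatible with $\tau'$ and witnesses that the $0$-sink $\sigma$ is reached by $\tau'$ in $Z_\alpha$, contradicting $\tau' \models Z_\alpha$.

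The remaining case, where $P$ avoids $\alpha$, is the main obstacle and is handled exactly like the split-point analysis in Lemma~\ref{lem:maxinode}. Since $P$ and $P_\alpha$ both start at the root but $P$ does not contain $\alpha$, they share a last common node $\gamma$, with distinct successors $\gamma_1$ on $P_\alpha$ and $\gamma_2$ on $P$. The node $\gamma$ is not a sink, and it cannot be a decision node, since then its two distinct outgoing edges would both have to carry the label dictated by $\tau''$; hence $\gamma$ is a decomposable $\land$-node, so $\var(Z_{\gamma_1}) \cap \var(Z_{\gamma_2}) = \emptyset$. As $\alpha$ is reachable from $\gamma_1$ we have $W \subseteq \var(Z_{\gamma_1})$, while the suffix of $P$ after $\gamma$ stays inside $Z_{\gamma_2}$ and therefore tests only variables disjoint from $W$; the prefix of $P$ up to $\gamma$ coincides with a prefix of $P_\alpha$ and, by the opening observation, also tests no variable of $W$. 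Consequently every decision variable on $P$ lies outside $W$, where $\tau''$ equals $\tau$, so $P$ is compatible with $\tau$ and $\sigma$ is reached by $\tau$, contradicting $\tau \models Z$. Both cases being impossible, no $0$-sink is reached by $\tau''$, i.e. $\tau'' \models Z$, as required.
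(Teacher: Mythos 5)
Your proof is correct and follows essentially the same route as the paper's: fix a root-to-$\alpha$ path compatible with both $\tau$ and $\tau''$ (using the ordering constraint), then rule out a $0$-sink path for $\tau''$ by splitting on whether it passes through $\alpha$, with the decomposable $\land$-node at the divergence point handling the second case. Your write-up is in fact slightly more careful than the paper's in spelling out why the entire offending path tests no variable of $\var(Z_\alpha)$ and is therefore compatible with $\tau$.
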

\begin{proof}
First of all, observe that there exists a path $P_\alpha$ from the root to $\alpha$ that is compatible with $\tau$. Moreover, every variable tested on this path have to be smaller than the variables in $Z_\alpha$ since $Z$ is an $\aobdd_<$. Thus, this path is also compatible with $\tau''$.

Now, assume toward a contradiction that there exists a path $P$ from the root of $Z$ to a $0$-sink that is compatible with $\tau''$. If $P$ contains $\alpha$ then it gives a path from $\alpha$ to a $0$-sink compatible with $\tau'$, contradicting the fact that $\tau' \models Z_\alpha$. Thus $P$ does not contain $\alpha$. Now let $\gamma$ with successor $\gamma_1,\gamma_2$ be the first node on which $P$ and $P_\alpha$ splits, that is, $\gamma$ is in $P$ and $P_\alpha(\tau'')$ but $\gamma_1$ is only in $P$ and $\gamma_2$ only in  $P_\alpha(\tau'')$. It is easy to see that $\gamma$ is not a decision node thus $\gamma$ is an $\land_d$-node. Thus, $\var(Z_{\gamma_1}) \cap \var(Z_{\gamma_2}) = \emptyset$, that is, $\var(Z_{\gamma_1}) \cap \var(Z_\alpha) = \emptyset$. Thus, $P$ is compatible with $\tau$, which contradicts the fact that $\tau \models Z$.
\end{proof}

\subsection{Generic instances}
\label{sec:geninst}

Given a graph $G = (V,E)$, we define $V^1 = \{v^1 \mid v \in V\}$, $V^2 = \{v^2 \mid v \in V\}$ and $F^2_G$ the formula whose variables are $V^1 \cup V^2$ and clauses are: 
\begin{itemize}
\item for every $\{u,v\} \in V$, $u^1 \vee v^2$ and $u^2 \vee v^1$,
\item $C_1 = \bigvee_{u \in V} \neg u^1$ and,
\item $C_2 = \bigvee_{u \in V} \neg u^2$.
\end{itemize}

\paragraph{Example.} Let $G$ be the triangle graph on vertices $\{x,y,z\}$. We have 
\[
\begin{aligned}
F^2_G = 
       & (x^1 \vee y^2) \land (x^2 \vee y^1) & \land \\ 
       & (z^1 \vee y^2) \land (z^2 \vee y^1) & \land \\ 
       & (x^1 \vee z^2) \land (x^2 \vee z^1) & \land \\
       & (\neg x^1 \vee \neg y^1 \vee \neg z^1) & \land \\
       & (\neg x^2 \vee \neg y^2 \vee \neg z^2). &
\end{aligned}
\]

\begin{lemma}
\label{lem:extractnode}
  Let $G = (V,E)$ be a graph, $<$ an order on $V^1 \cup V^2$ and $Z$ be an $\adobdd_<$ computing $F^2_G$. Let $u \in V^1 \cup V^2$ and $\tau : [\leq u] \rightarrow \{0,1\}$ and $W = \{v \in V \mid \{u,v\} \in E \text{ and } \tau(u^1) = 0 \}$. Assume that:
\begin{itemize}
\item there exists $z^1 \in V^1$ with $z^1 \leq u$ and $\tau(z^1) = 0$ and, 
\item for every $v^2 \in V^2$ with $v^2 \leq u$, we have  $\tau(v^2) = 1$,
\item $(V^2 \cap [>u]) \setminus W^2 \neq \emptyset$.
\end{itemize}
There exists a node $\alpha$ in $Z$ such that $\alpha$ is reached by $\tau$, is maximal and $\var(Z_\alpha) \supseteq (V^2 \cap [>u]) \setminus W^2$ where $W = \{v \in V \mid \{u,v\} \in E \text{ and } \tau(u^1) = 0 \}$.
\end{lemma}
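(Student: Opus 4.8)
The plan is to use Lemma~\ref{lem:reachable} to write $Z[\tau]$ as a conjunction of the sub-$\adobdd$s rooted at the maximal nodes reached by $\tau$, whose variable sets are pairwise disjoint, and then to argue that the variables of $S := (V^2 \cap [>u]) \setminus W^2$ cannot be spread across two of these blocks. Concretely, let $M$ be the set of maximal nodes reached by $\tau$; by Lemma~\ref{lem:reachable} we have $Z[\tau] \equiv \bigwedge_{\alpha \in M} Z_\alpha$ with $\var(Z_\alpha) \cap \var(Z_\beta) = \emptyset$ for distinct $\alpha, \beta \in M$, and each $\var(Z_\alpha) \subseteq [>u]$ (a maximal reached node is either a sink or a decision node on a variable $> u$, by the analysis in the proof of Lemma~\ref{lem:reachable}, after which all descendants test larger variables). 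The node $\alpha$ we seek will be the unique block of this partition whose variable set contains all of $S$.

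First I would record the combinatorial behaviour of $F^2_G[\tau]$ on the free variables $[>u]$. Let $\gamma_\emptyset$ be the assignment of $[>u]$ setting every free variable to $1$. Using the hypotheses that the small $V^2$ variables are already set to $1$ and that some small $V^1$ variable is $0$, a direct clause-by-clause check shows that $\gamma_\emptyset$ satisfies every clause of $F^2_G[\tau]$ except $C_2$, which it falsifies since all $V^2$ variables are then $1$. Next, for each $s \in S$ let $\gamma_s$ be obtained from $\gamma_\emptyset$ by flipping the single coordinate $s$ to $0$. I claim $\gamma_s \models F^2_G[\tau]$: now $C_2$ is satisfied, and the only clauses that could break are the edge clauses in which the literal $s$ occurs, which require a neighbouring $V^1$-variable to equal $1$; these fail only if $s$ has a neighbour whose $V^1$-variable is set to $0$ by $\tau$, and this is exactly the condition placing $s$ in $W$. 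Since $s \in S$ means $s \notin W$, all such clauses hold. This is the step where the role of $W^2$ is essential: removing $W^2$ from $V^2 \cap [>u]$ discards precisely the free $V^2$ variables that $\tau$ forces to $1$ through the edge clauses, leaving a set $S$ on which the surviving part of $C_2$ acts as the single connected clause $\bigvee_{s \in S} \neg s$.

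The core of the argument is to show that $S$ lies inside one block. Each $s \in S$ lies in some $\var(Z_\alpha)$, because $F^2_G[\tau]$ depends on $s$ (witnessed by $\gamma_\emptyset$ versus $\gamma_s$) while the function of $Z_\alpha$ depends only on $\var(Z_\alpha)$. Suppose towards a contradiction that $S$ meets two distinct blocks, and pick $a, b \in S$ with $a \in \var(Z_{\alpha_1})$, $b \in \var(Z_{\alpha_2})$, $\alpha_1 \neq \alpha_2$. Since $Z[\tau] \equiv \bigwedge_{\alpha \in M} Z_\alpha$ is a conjunction over disjoint variable sets, an assignment satisfies $Z[\tau]$ iff its restriction to each $\var(Z_\alpha)$ satisfies $Z_\alpha$. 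For every block $\alpha \in M$, at least one of $a, b$ lies outside $\var(Z_\alpha)$ (a block is disjoint from at least one of $\var(Z_{\alpha_1}), \var(Z_{\alpha_2})$); calling it $s$, the assignment $\gamma_s$ satisfies $Z[\tau]$ and agrees with $\gamma_\emptyset$ on $\var(Z_\alpha)$ since they differ only at $s \notin \var(Z_\alpha)$, whence $\gamma_\emptyset$ restricted to $\var(Z_\alpha)$ satisfies $Z_\alpha$. As this holds for every block, $\gamma_\emptyset$ satisfies $\bigwedge_{\alpha \in M} Z_\alpha \equiv Z[\tau] \equiv F^2_G[\tau]$, contradicting that $\gamma_\emptyset$ falsifies $C_2$. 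Hence all of $S$ is contained in a single $\var(Z_\alpha)$ with $\alpha \in M$; this $\alpha$ is reached by $\tau$, is maximal, and satisfies $\var(Z_\alpha) \supseteq (V^2 \cap [>u]) \setminus W^2$, as required.

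I expect the main obstacle to be the clause bookkeeping of the second paragraph: pinning down exactly which free $V^2$ variables survive the restriction, equivalently verifying that $s \notin W$ is precisely the condition making $\gamma_s$ satisfying, and thereby justifying that excluding $W^2$ is both necessary and sufficient for the entanglement argument. Once the two assignment families $\gamma_\emptyset$ and $\{\gamma_s\}_{s \in S}$ behave as claimed, the product/rectangle step of the third paragraph is essentially forced by the disjointness in Lemma~\ref{lem:reachable}.
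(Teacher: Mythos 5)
Your proof is correct, and it reaches the same conclusion by a mildly but genuinely different route. The paper first proves a dichotomy for \emph{every} node $\alpha$ reached by $\tau$ (not only the maximal ones): $\var(Z_\alpha)$ either contains all of $S=(V^2\cap[>u])\setminus W^2$ or is disjoint from it. It does so by invoking Lemma~\ref{lem:rectangle} to splice together the two satisfying assignments that flip $x$ and $y$ respectively, producing an assignment with all $V^2$-variables equal to $1$ that still satisfies $Z$, contradicting $C_2$. Only afterwards does it use Lemma~\ref{lem:reachable} to argue that some maximal node must intersect $S$ (otherwise $Z[\tau]$ would not depend on $S$, yet the all-ones extension and a single-flip extension disagree). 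You instead stay entirely inside the block decomposition of Lemma~\ref{lem:reachable}: the dependency argument with $\gamma_\emptyset$ versus $\gamma_s$ places each $s\in S$ in some block, and the product structure of $\bigwedge_{\alpha\in M} Z_\alpha$ over disjoint variable sets shows $S$ cannot straddle two blocks, since otherwise every block would be satisfied by the restriction of $\gamma_\emptyset$ and hence $\gamma_\emptyset\models Z[\tau]$, contradicting $C_2$. The combinatorial core is identical in both proofs (the all-ones assignment, the single-flip assignments, and the observation that $s\notin W$ is exactly what makes $\gamma_s$ satisfying, which you verify correctly); what your version buys is that Lemma~\ref{lem:rectangle} is not needed at all for this lemma, and the splicing of assignments is replaced by the cleaner statement that a conjunction over disjoint variable sets is satisfied coordinatewise. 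The paper's version buys a slightly stronger intermediate fact (the dichotomy holds for all reached nodes, maximal or not), which it does not subsequently exploit. Both arguments implicitly use the same (correct) reading of the slightly garbled definition of $W$ in the statement, namely that $W$ consists of the vertices having a neighbour $v$ with $v^1\le u$ and $\tau(v^1)=0$.
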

\begin{proof}
  Assume toward a contradiction that there exists a node $\alpha$ reached by $\tau$ such that $\var(Z_\alpha) \cap (V^2 \cap [>u]) \setminus W^2 \neq \emptyset$ and $x \in (V^2 \cap [>u]) \setminus W^2$ with $x \notin \var(Z_\alpha)$ and let $y \in \var(Z_\alpha) \cap (V^2 \cap [>u]) \setminus W^2$. Let $\tau_1$ be the assignment such that $\tau_1(x) = 0$ and for every $z > u$ such that $z \neq x$, $\tau_1(z) = 1$. Let $\tau_2$ be the assignment such that $\tau_2(y) = 0$ and for every $z > u$ such that $z \neq y$, $\tau_2(z) = 1$. 

It is easy to see that $\tau \cup \tau_1$ satisfies $F^2_G$. Indeed, $C_1$ is satisfied by $\tau$ since by assumption $\tau(z^1) = 0$ and $C_2$ is satisfied since $\tau_1(x) = 0$ and $x \in V^2$. Now, let $\{v,w\} \in E$. Observe that the only variable of $V^2$ set to $0$ by $\tau_1$ is $x$. Thus, if $w \neq x$, $v^1 \vee w^2$ is satisfied. Now, assume $w = x$. If $v^1 > u$, then by definition $\tau_1(v^1) = 1$ thus $v^1 \vee w^2$ is satisfied. Now if $v^1 \leq u$, then since $x \notin W^2$, by definition of $W^2$, $v^1$ is not assigned to $0$ by $\tau$. Thus, $\tau(v^1) = 1$ and $v^1 \vee w^2$ is satisfied. Similarly, $\tau \cup \tau_2$ satisfies $F^2_G$. 

By Lemma~\ref{lem:rectangle}, $\tau' = \tau \cup \tau_1|_{\var(Z_\alpha)} \cup \tau_1|_{[>u] \setminus \var(Z_\alpha)} \models F^2_G$. However, it is straightforward to see that $\tau'(x) = \tau'(y) = 1$. And thus, for every $v^2 \in V^2$, $\tau'(v^2) = 1$, thus $\tau'$ does not satisfy $C_2$. Contradiction.

We have shown so far that if a node $\alpha$ is reached by $\tau$, either $\var(Z_\alpha) \supseteq (V^2 \cap [>u]) \setminus W^2$ or $\var(Z_\alpha) \cap (V^2 \cap [>u]) \setminus W^2 = \emptyset$. It thus remains to prove that there exists a maximal node $\alpha$ reached by $\tau$ such that $\var(Z_\alpha) \cap (V^2 \cap [>u]) \setminus W^2 \neq \emptyset$. Let $M$ be the set of nodes reached by $\tau$ that are maximal and assume that for every $\alpha \in M$, $\var(Z_\alpha) \cap (V^2 \cap [>u]) \setminus W^2 = \emptyset$. By Lemma~\ref{lem:reachable}, we have $Z[\tau] = \bigwedge_{\alpha \in M} Z_\alpha$. That is, $Z[\tau]$ does not depend on $(V^2 \cap [>u]) \setminus W^2$. Let $\tau' : [>u] \rightarrow \{0,1\}$ be such that for every $v > u$, $\tau'(v) = 1$. By definition, for every $v \in V$, $(\tau \cup \tau')(v^2) = 1$. Thus $\tau \cup \tau' \not \models F^2_G$ since $C_2$ is not satisfied by $\tau \cup \tau'$. However, we claim that $\tau' \models Z[\tau]$. Indeed, let $v^2 > u$ be such that $v^2 \notin W^2$ which exists since $(V^2 \cap [>u]) \setminus W^2 \neq \emptyset$. Let $\tau''$ be the assignment that differs from $\tau'$ only on $v^2$. It is readily verified that $\tau \cup \tau'' \models Z$ thus $\tau'' \models Z[\tau]$. However, since $Z[\tau]$ does not depend on $v^2$, $\tau' \models Z[\tau]$ too. Contradiction.
\end{proof}

We are ready to prove a general lower bounds on $\adobdd_<$ computing $F^2_G$. 

\begin{theorem}
  \label{thm:genericlb} Let $G = (V,E)$ be a graph, $<$ an order on $V^1 \cup V^2$ and $Z$ be an $\adobdd_<$ for $F^2_G$. Assume there exists $u \in V^1 \cup V^2$ and a non-empty set $M \subseteq V^1 \times V^2$ such that:
  \begin{itemize}
  \item for every $(v^1,w^2) \in M$, $v^1 \leq u < w^2$ and,
  \item $E(M) = \{\{v,w\} \mid (v^1,w^2) \in M\}$ is an induced matching of $G$,
  \end{itemize}
We have $|Z| \geq 2^{|M|-2}$.
\end{theorem}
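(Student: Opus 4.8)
The plan is to run a fooling-set argument across the cut induced by $u$, indexing the fooling set by subsets of the matching $M$. Write $M = \{(v_1^1, w_1^2), \dots, (v_m^1, w_m^2)\}$ with $m = |M|$, so each $v_i^1 \leq u < w_i^2$ and the edges $\{v_i, w_i\}$ form an induced matching. The role of the induced matching is that it \emph{decouples} the pairs: for $i \neq j$ there is no edge $\{v_i, w_j\}$, hence no clause $v_i^1 \vee w_j^2$, so setting $v_i^1 \mapsto 0$ forces $w_i^2 \mapsto 1$ through the clause $v_i^1 \vee w_i^2$ but places no constraint on any $w_j^2$ with $j \neq i$.

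For each $S \subseteq \{1, \dots, m\}$ I define $\tau_S : [\leq u] \to \{0,1\}$ by $\tau_S(v_i^1) = 0$ for $i \in S$ and $\tau_S(v_i^1) = 1$ for $i \notin S$, setting every remaining $V^1$-variable in $[\leq u]$ to $1$ and every $V^2$-variable in $[\leq u]$ to $1$. I restrict attention to the subfamily $\mathcal{S} = \{S : 1 \in S,\ 2 \notin S\}$, of size exactly $2^{m-2}$. For $S \in \mathcal{S}$ the three hypotheses of Lemma~\ref{lem:extractnode} hold: $z^1 = v_1^1$ witnesses a $0$ in $V^1 \cap [\leq u]$; every $V^2$-variable below $u$ is set to $1$ by construction; and $w_2^2 \in (V^2 \cap [>u]) \setminus W_S^2$ is not forced since $2 \notin S$ and the matching is induced. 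Lemma~\ref{lem:extractnode} then yields a maximal node $\alpha_S$ reached by $\tau_S$ with $\var(Z_{\alpha_S}) \supseteq (V^2 \cap [>u]) \setminus W_S^2$; by the decoupling, $W_S^2$ contains $w_i^2$ for every $i \in S$ and excludes $w_j^2$ for every $j \notin S$.

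The heart of the argument is to show that $S \mapsto \alpha_S$ is injective on $\mathcal{S}$. Suppose $\alpha_S = \alpha_{S'} =: \alpha$ for $S \neq S'$ and pick $j \in S \setminus S'$ (note $j \notin \{1,2\}$ since all members of $\mathcal{S}$ agree there). Since $j \notin S'$, the decoupling gives $w_j^2 \notin W_{S'}^2$, so $w_j^2 \in \var(Z_\alpha)$ by the containment for $\alpha_{S'}$. On the $\tau_S$ side, pick any full satisfying assignment $\sigma$ of $Z$ extending $\tau_S$ (one exists: set all variables above $u$ to $1$ except $w_2^2 \mapsto 0$, which satisfies $C_2$ and, by the induced matching, no edge clause); then $\alpha$ is reached by $\sigma$, and Lemma~\ref{lem:rectangle} shows that if $Z_\alpha$ had a satisfying assignment with $w_j^2 \mapsto 0$ we could splice it into $\sigma$ to obtain a satisfying assignment of $F^2_G$ with $v_j^1 = w_j^2 = 0$, violating the clause $v_j^1 \vee w_j^2$; hence every satisfying assignment of $Z_\alpha$ sets $w_j^2 = 1$. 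On the $\tau_{S'}$ side, the extension $\pi$ of $\tau_{S'}$ with $w_j^2 \mapsto 0$ and all other variables above $u$ equal to $1$ satisfies $F^2_G$ (here $C_2$ is satisfied by $w_j^2 = 0$, $C_1$ by $v_1^1 = 0$, and no edge clause is violated, because the only $V^1$-zeros are matched and $\{v_i, w_j\} \notin E$ for $i \neq j$); by Lemma~\ref{lem:reachable}, the restriction of $\pi$ to $\var(Z_\alpha)$ satisfies $Z_\alpha$ while setting $w_j^2 = 0$. These two conclusions contradict each other, so $\alpha_S \neq \alpha_{S'}$.

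Injectivity produces $2^{m-2}$ distinct nodes of $Z$, whence $|Z| \geq 2^{|M|-2}$. The main obstacle is the bookkeeping around the forced set $W_S$: one must verify that the induced-matching hypothesis really makes $W_S$ agree with $\{w_i : i \in S\}$ on the \emph{matched} variables, and, crucially, that the two explicit extensions $\sigma$ and $\pi$ never violate a clause $v^1 \vee w^2$ through an \emph{unmatched} neighbour of $w_j$ or $w_2$. Setting all unmatched $V^1$-variables to $1$ neutralises exactly this danger, but checking it carefully — together with confirming each hypothesis of Lemma~\ref{lem:extractnode} uniformly over the subfamily $\mathcal{S}$ — is where the real work lies; the fooling contradiction itself is then immediate from Lemmas~\ref{lem:rectangle} and~\ref{lem:reachable}.
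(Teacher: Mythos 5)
Your proof is correct and follows essentially the same route as the paper's: the same family of $2^{|M|-2}$ partial assignments obtained by reserving two matching edges for fixed roles (one pair zeroed to witness the first hypothesis of Lemma~\ref{lem:extractnode}, one pair left free so its $V^2$-endpoint witnesses the third), the same extraction of maximal nodes via Lemma~\ref{lem:extractnode}, and the same injectivity argument that splices, via Lemma~\ref{lem:rectangle}, a satisfying assignment of $Z_\alpha$ with $w_j^2=0$ (supplied by Lemma~\ref{lem:reachable} on the $S'$ side) into a satisfying extension of $\tau_S$ with $v_j^1=0$, violating the clause $v_j^1\vee w_j^2$. The bookkeeping you flag about unmatched neighbours is handled exactly as you describe, by setting all unmatched $V^1$-variables to $1$.
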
 
\begin{proof}
If $|M| \leq 2$, the result is trivial. Otherwise, we arbitrarily pick two distinct edges $(a^1,b^2), (c^1,d^2)\in M$ and let $M' = M \setminus \{(a^1,b^2), (c^1,d^2) \}$. Let $U_1$ be the variables of $V^1$ that are in $M'$. That is $U_1 = \{x^1 \mid \exists y^2 (x^1,y^2) \in M'\}$. Observe that since $E(M')$ is a induced matching of $G$ of size $|M|-2$, we have $|U_1| = |M|-2$.

Given $A \subseteq U^1$, we define $\tau_A: [\leq u] \rightarrow \{0,1\}$ as follows: 
\begin{itemize}
\item for every $v^1 \in \{a^1\} \cup A$, $\tau_A(v^1) = 0$,
\item $\tau(c^1) = 1$ and,
\item for every other $v \leq u$, $\tau_A(v) = 1$.
\end{itemize}
Since $\tau_A(a^1) = 0$, there exists $v^1 \leq u$ such that $v^1 \in V^1$ and $\tau_A(v^1) = 0$. Moreover, since $A \subseteq V^1$, for every $v^2 \in V^2 \cap [\leq u]$, $\tau_A(v^2) = 1$. Finally, $c^1 \leq u < d^2$ and since $E(M)$ is an induced matching, every neighbour $n$ of $d$ such that $n^1 \leq u$ verifies  $\tau_A(n^1) = 1$.  Thus, we can apply Lemma~\ref{lem:extractnode}. Let $\alpha_A$ be the node given by Lemma~\ref{lem:extractnode} applied to $Z$ and $\tau_A$.

We claim that if $A \neq B$, then $\alpha_A \neq \alpha_B$. Since $U_1 = |M'| = |M|-2$, the theorem follows since it gives $2^{|M|-2}$ different nodes in $Z$.

We now prove that if $A \neq B$, then $\alpha_A \neq \alpha_B$. Assume toward a contradiction that $\alpha_A = \alpha_B = \alpha$. Without lost of generality, assume there exists $v^1 \in A \setminus B$ and let $w^2$ be the only vertex of $W^2$ such that $(v^1,w^2) \in M'$ (the uniqueness of $w^2$ follows from the fact that $E(M)$ is a matching). By definition, $\tau_A(v^1) = 0$ and $\tau_B(v^1) = 1$. Moreover, since $E(M)$ is an induced matching, $w$ is not in the neighbourhood of any vertex of $M$ but $v$. Thus, it holds that $\tau_B(x^1) = 1$ for every neighbour $x$ of $w$ with $x^1 \leq u$. By Lemma~\ref{lem:extractnode}, $w^2 \in \var(Z_{\alpha_B})$.

Now, let $\tau_B':[>u]\rightarrow\{0,1\}$ be defined as follows: $\tau_B'(d^2) = \tau_B'(w^2) = 0$ and for every other variables $u'$ of $[>u]$, $\tau_B'(u') = 1$. It is easy to see that $\tau_B \cup \tau_B' \models F^2_G$, thus by Lemma~\ref{lem:reachable}, $\tau_B'|_{X_B} \models Z_{\alpha_B}$ where $X_B = \var(Z_{\alpha_B})$.

Finally, let $\tau'_A:[>u]\rightarrow\{0,1\}$ be defined as follows: $\tau'_A(d^2) = 0$ and for every other variables $u'$ of $[>u]$, $\tau'_A(u') = 1$. It is easy to see that $\tau_A \cup \tau'_A \models F^2_G$. If $\alpha_A = \alpha_B$ then  $\tau_A \cup \tau'_A$ reaches $\alpha_B$. By Lemma~\ref{lem:rectangle}, $\tau' = \tau_A \cup \tau'_B|_{X_B} \cup \tau'_A|_{[>u] \setminus X_B} \models F^2_G$.  However, $\tau'(v^1) = \tau_A(v^1) = 0$ and $\tau'(w^2) = \tau_B'(w^2) = 0$ meaning that $\tau'$ does not satisfy the clause $\{v^1,w^2\}$, contradiction. 
\end{proof}

\subsection{Instances of small incidence treewidth}
\label{sec:inctw}

In this section, we finally prove Theorem~\ref{thm:aobddlb} by constructing a family of graphs of incidence treewidth at most $k$ such that for any graph $G = (V,E)$ of this family and any order $<$ on $V^1 \cup V^2$, we can find a set $M$ of size $\Omega(k \log(n))$ as in the statement of Theorem~\ref{thm:genericlb}.

The following has been shown in~\cite{Razgon14}:
\begin{theorem}[Theorem~3 in~\cite{Razgon14}]
\label{thm:razgongraphs}
There exists a constant $b$ such that for every $k$, there is an infinite class of graphs $\calG_k$ of degree at most $5$ and treewidth at most $k$ such that for every $G = (V,E)$ and linear order $<$ on $V$, there exists $u \in V$ and a matching $M$ of size at least $k\log(|V|)/b$ and such that for every $\{v,w\} \in M$ with $v < w$, we have $v \leq u < w$.
\end{theorem}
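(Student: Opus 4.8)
The plan is to reduce the statement to a lower bound on the \emph{pathwidth} of an explicitly constructed bounded-degree family, exploiting the fact that on a bounded-degree graph a cut with large vertex boundary automatically carries a large crossing matching. Concretely, for each $k$ I take $\calG_k$ to consist of the Cartesian products $T_h \square P_{k'}$ of the complete binary tree $T_h$ of height $h$ with the path $P_{k'}$ on $k' = \lfloor (k+1)/2 \rfloor$ vertices, for $h = 1,2,3,\dots$; letting $h \to \infty$ gives an infinite class for each fixed $k$. A vertex of this product has tree-degree at most $3$ and path-degree at most $2$, hence total degree at most $5$, which is exactly the degree bound in the statement and explains its appearance. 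For the treewidth, start from the width-$1$ tree decomposition of $T_h$ whose bags are the edges of $T_h$ and replace each bag $B$ by $B \times \{1,\dots,k'\}$; one checks this is a valid tree decomposition of $T_h \square P_{k'}$ (every tree-edge and every path-edge lies in some bag, and the bags containing a fixed vertex $(v,j)$ stay connected because the bags of $T_h$ containing $v$ form a connected subtree), so $\tw(T_h \square P_{k'}) \le 2k'-1 \le k$.

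The engine of the argument is a single-cut lemma, which is where bounded degree is used. Fix any linear order $v_1 < \dots < v_N$ on $V$ and, for each $i$, let $X_i = \{v_j : j \le i,\ v_j \text{ has a neighbour } v_{j'} \text{ with } j' > i\}$ be the left boundary of the $i$-th prefix cut. I claim a maximal matching $M_i$ among the edges crossing this cut already covers $X_i$ up to the maximum degree: every $v \in X_i$ is either matched by $M_i$ or has all its crossing-neighbours matched, and each of the $\le |M_i|$ matched right-endpoints has at most $\Delta = 5$ neighbours, so $|X_i| \le (\Delta+1)|M_i| = 6|M_i|$. Taking $i$ maximising $|X_i|$ and setting $u = v_i$, and recalling $\max_i |X_i| = \mathrm{vsn}(<) \ge \mathrm{pw}(G)$ (Kinnersley's characterisation of pathwidth as the minimum vertex-separation number of a layout), we obtain a \emph{single} cut $u$ whose crossing matching $M := M_i$ has size $\ge \mathrm{pw}(G)/6$; by construction every $\{v,w\}\in M$ with $v<w$ satisfies $v \le u < w$. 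This disposes of the delicate point — getting all the matching edges across one common cut $u$ — essentially for free, since vertex separation is by definition a single-cut quantity.

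It remains to prove the pathwidth lower bound $\mathrm{pw}(T_h \square P_{k'}) = \Omega(k' h)$, and this is the main obstacle. The intended route is a fugitive-search / nested-separation argument that \emph{thickens} the classical proof that $T_h$ has pathwidth $\Theta(h)$: the binary tree forces $\Theta(h)$ nested balanced separations, and the inter-layer matchings of the product prevent any of them from being realised cheaply, so each separation must be crossed simultaneously in all $k'$ layers, costing $\Omega(k')$ in the vertex separation of any layout. I would make this precise either through a monotone node-search lower bound (showing that fewer than $c\,k'h$ searchers cannot clear $T_h \square P_{k'}$) or through a recursive lower bound on $\mathrm{vsn}$ mirroring the decomposition $T_h = \{r\} \cup T_{h-1} \cup T_{h-1}$. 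The subtlety, as always for pathwidth, is the bookkeeping that converts per-subtree boundaries into a boundary at one global prefix of the layout; the search-game formulation is the cleanest way to handle it, because monotonicity of the search makes the "thickening by $k'$" argument local to each recovered separation.

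Granting this bound, the theorem follows by assembling the pieces. Since $N = |V| = k'(2^{h+1}-1)$ we have $\log N = \Theta(h)$ for fixed $k$, so the matching $M$ produced by the single-cut lemma has size $\ge \mathrm{pw}(G)/6 = \Omega(k'h) = \Omega(k \log N)$; choosing the constant $b$ to absorb the factor $6$, the implied constants of the pathwidth bound, and the conversion between $h$ and $\log N$, we obtain for every $G = (V,E) \in \calG_k$ and every order $<$ a vertex $u$ and a matching $M$ of size at least $k \log(|V|)/b$ with $v \le u < w$ for each $\{v,w\}\in M$, $v<w$. The degree bound, the treewidth bound $\le k$, and the infinitude of $\calG_k$ were already established, completing the plan; the whole weight of the proof rests on the pathwidth lower bound of the third paragraph.
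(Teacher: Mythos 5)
Your reduction machinery is sound: the degree and treewidth computations for $T_h \square P_{k'}$ are correct, and the single-cut lemma (maximal matching among edges crossing the prefix cut that attains the vertex separation number, combined with Kinnersley's characterisation $\mathrm{pw}(G)=\min_{<}\mathrm{vsn}(<)$ and the degree bound) is a clean and correct way to convert a pathwidth lower bound into the ``one common cut $u$'' statement of the theorem. Note, for calibration, that the paper does not prove this statement at all -- it is imported verbatim as Theorem~3 of the cited work of Razgon -- so your attempt is a from-scratch reconstruction, and it stands or falls with its one unproven ingredient.

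That ingredient, $\mathrm{pw}(T_h \square P_{k'}) = \Omega(k'h)$, is precisely the entire mathematical content of the theorem, and your third paragraph only gestures at it; worse, the gestured strategy does not obviously close. The natural ``thickening'' argument runs as follows: the within-layer boundaries of the $k'$ tree layers are disjoint, so the total boundary at time $t$ is at least $\sum_j \beta(P_j(t))$, where $P_j(t)$ is the set of tree vertices whose layer-$j$ copy is placed; and the number of split columns bounds $|P_j(t)\,\triangle\,P_{j'}(t)|$ for every pair of layers. But this yields only an either/or: if the split count stays below $S$ at all times, then at the moment one layer's boundary peaks at $\Omega(h)$ every other layer's boundary is at least $\Omega(h)-4S$ (each vertex moved across the cut changes the boundary by at most $1+\deg$), which forces total boundary $\Omega(k'h)$ only when $S = O(h)$; otherwise the split count itself certifies only $\Omega(h)$. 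So the dichotomy gives $\mathrm{pw} = \Omega(h) = \Omega(\log n)$, not $\Omega(k'\log n)$: an adversary allowed $\Theta(h)$ split columns can, as far as this argument shows, desynchronise the layers so that their hard moments never coincide, and the per-layer peaks (which for tree layouts can be brief and occur at wildly unbalanced prefixes -- place one height-$(h-1)$ subtree first and the boundary drops to $1$ at the halfway point) contribute one at a time. Ruling this out is the genuinely hard part; neither the monotone-search formulation nor the recursive $\mathrm{vsn}$ bound you name resolves it for free, and I am not even certain the $\Omega(k'h)$ bound holds for this particular product family. As written, the proposal is therefore a correct reduction of the theorem to an unproven (and nontrivial) pathwidth lower bound, not a proof.
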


\begin{lemma}
\label{lem:obddtw}
  Let $G$ be a graph of treewidth $k$. We have $\itw(F_G^2) \leq 2k+3$. 
\end{lemma}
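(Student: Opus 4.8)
The plan is to mirror the strategy already used for Lemma~\ref{lem:itwfg}. The obstruction to a naive argument is exactly the two long clauses $C_1$ and $C_2$: in the primal graph of $F^2_G$ they induce a clique on $V^1$ and a clique on $V^2$, so the primal treewidth of $F^2_G$ is essentially $|V|$ and Theorem~\ref{thm:ptwvsitw} is useless if applied directly. I would therefore first discard $C_1$ and $C_2$, bound the incidence treewidth of what remains, and then reinsert the two clauses into every bag at an additive cost of $2$.

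Concretely, let $F' = F^2_G \setminus \{C_1,C_2\}$ be the formula consisting only of the binary clauses $u^1 \vee v^2$ and $u^2 \vee v^1$ for $\{u,v\} \in E$. First I would show $\ptw(F') \leq 2k+1$. Its primal graph $H$ has vertex set $V^1 \cup V^2$, with an edge $u^1 v^2$ and an edge $u^2 v^1$ for each edge $\{u,v\}$ of $G$ (in particular $H$ is bipartite between $V^1$ and $V^2$). Starting from a tree decomposition $T$ of $G$ of width $k$, I replace each bag $B_t$ by $\tilde B_t = \{v^1, v^2 \mid v \in B_t\}$, so $|\tilde B_t| \leq 2(k+1)$. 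This is a tree decomposition of $H$: every edge of $H$ comes from an edge $\{u,v\}\in E$, which lies in some bag $B_t$, so $\tilde B_t$ contains both endpoints; and $\{t \mid v^1 \in \tilde B_t\} = \{t \mid v \in B_t\}$ is connected, likewise for $v^2$. Hence $\tw(H) \leq 2k+1$, i.e.\ $\ptw(F') \leq 2k+1$, and by Theorem~\ref{thm:ptwvsitw} we get $\itw(F') \leq 2k+1$.

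Next I would take a tree decomposition $S$ of the incidence graph of $F'$ of width at most $2k+1$ and add $C_1$ and $C_2$ to every bag, raising the width to at most $2k+3$. It remains to check this is a valid tree decomposition of the incidence graph of $F^2_G$. The only edges present in the incidence graph of $F^2_G$ but not of $F'$ are $C_1 v^1$ and $C_2 v^2$ for $v \in V$; since $C_1,C_2$ now lie in every bag and each variable $v^1$ (resp.\ $v^2$) occurs in at least one bag of $S$, every such edge is covered. The connectivity condition is preserved: the sets of bags containing the original vertices are unchanged, and $C_1$, $C_2$ occupy all bags, so their occurrence sets are the whole (connected) tree. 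This yields $\itw(F^2_G) \leq 2k+3$.

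The only genuinely delicate point, and the one I would watch most carefully, is the reinsertion step: one must confirm that $C_1$ and $C_2$ are the sole source of new incidence edges (so that a width increase of exactly $2$ suffices) and that every variable vertex actually appears in some bag of $S$ so the new edges are covered. A minor corner case is a vertex $v$ that is isolated in $G$, for which $v^1,v^2$ occur only in the long clauses; this is handled by assuming without loss of generality that $G$ has no isolated vertices, or simply by placing such a variable in an arbitrary bag. Everything else is the routine factor-$2$ bookkeeping forced by doubling $V$ into $V^1 \cup V^2$, which accounts for the $2k$ in the bound.
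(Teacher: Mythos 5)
Your proof is correct and follows essentially the same construction as the paper: take a width-$k$ tree decomposition of $G$, replace each $v$ by $v^1,v^2$, and insert $C_1,C_2$ into every bag. Your detour through $\ptw(F^2_G\setminus\{C_1,C_2\})\leq 2k+1$ and Theorem~\ref{thm:ptwvsitw} is a slightly tidier way to account for the binary clause vertices (which the paper's sketch leaves to ``direct inspection''), and your handling of isolated vertices is a reasonable extra precaution.
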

\begin{proof}
Take a tree decomposition for $G$ of width $k$.
That is, the size of each bag is at most $k+1$.
For each $u \in V(G)$, replace the occurrence of $u$ in each
bag by $u^1$ and $u^2$. Then add the occurrences of negative clauses
to each bag. It is verifiable by a direct inspection that we obtain
a tree decomposition of the incidence graph of $F^2_{G}$
where the size of each bag is at most $2k+4$. Hence the width of the tree
decomposition is at most $2k+3$.  
\end{proof}

We are now ready to prove Theorem~\ref{thm:aobddlb}.
\begin{proof}[Proof (of Theorem~\ref{thm:aobddlb}).]
For $k \geq 2$, let $k' = \lfloor (k-3)/2 \rfloor$ and  let $\calF_k = \{F_G^2 \mid G \in \calG_{k'}\}$ where $\calG_k$ is the class of graphs from Theorem~\ref{thm:razgongraphs}. By Lemma~\ref{lem:obddtw}, we have that every formula of $\calF_k$ is of incidence treewidth at most $2k'+3 \leq k$.

Let $G=(V,E)$ be a graph and $<$ be an order on $V^1 \cup V^2$, we denote by $\prec$ the order on $V$ defined as follows: for every $u,v \in V$, $u \prec v$ if and only if $\min(u^1,u^2) < \min(v^1,v^2)$, that is, $u \prec v$ iff the first copy of $u$ comes before the first copy of $v$.

Let $u \in V$ and $M$ be an induced matching such that for every $\{v,w\} \in M$ with $v \prec w$, we have $v \preceq u \prec w$. Let $A_1 = \{v \mid \{v,w\} \in M \text{ and } v \prec w\}$ and $A_2 = \{v \mid \{v,w\} \in M \text{ and } v \prec w\}$. By definition of $\prec$, there exists $i \in \{1,2\}$ such that for every $\{v,w\} \in M$ with $v \prec w$, either $v^1 \leq u^i < w^2$ or $v^2 \leq u^i < w^1$.  Let $M' = \{(v^1,w^2) \mid \{v,w\} \in M, v^1 \leq u^i < w^2 \}$. We assume that $|M'| > |M|/2$. If it is not the case, we can enforce it by only changing the roles of colors $1$ and $2$. It is readily verified that $M'$ verifies the conditions of Theorem~\ref{thm:genericlb} and thus, any $\adobdd_<$ is of size at least $2^{|M|/2-1}$. 
  
Now, if $G$ is a graph from $\calG_{k'}$, then we know that such a matching $M$ exists and is of size $\Omega(k'\log(n)) = \Omega(k\log(n))$. Moreover, since $G$ is of degree at most $5$, by using Lemma~\ref{lem:extractinduced}, we can extract an induced matching from $M$ of size $\Omega(k \log(n))$ too, leading to an $n^{\Omega(k)}$ lower bound on the size of $\adobdd$ for $F_G^2$.
\end{proof}

\section{Future research}
\label{sec:conclusion}

In this paper, we have shown that two restrictions of decDNNF cannot represent instances of bounded incidence treewidth efficiently, that is, in FPT-size. The question of whether decDNNF can efficiently represent instances of bounded incidence treewidth is still open and proving a non-FPT lower bound in this case would likely require new techniques to be developed. Indeed, if $G$ is of bounded treewidth, then both $F_G$ and $F^2_G$ can be represented by FPT-size decDNNF. 

Another interesting and related question would be to understand the proof complexity of CNF-formulas of bounded incidence treewidth. It is indeed still open whether unsatisfiable CNF-formulas of incidence treewidth $k$ have a resolution refutation of FPT-size. 

\bibliography{/home/fcapelli/biblio.bib}

\begin{thebibliography}{10}

\bibitem{BacchusDP03}
Fahiem Bacchus, Shannon Dalmao, and Toniann Pitassi.
\newblock {Algorithms and Complexity Results for \#{SAT} and Bayesian
  Inference}.
\newblock In {\em Proceedings of the 44th Annual IEEE Symposium on Foundations
  of Computer Science}, FOCS '03, pages 340--, Washington, DC, USA, 2003. IEEE
  Computer Society.

\bibitem{BeameLRS13}
Paul Beame, Jerry Li, Sudeepa Roy, and Dan Suciu.
\newblock Lower bounds for exact model counting and applications in
  probabilistic databases.
\newblock In {\em Proceedings of the Twenty-Ninth Conference on Uncertainty in
  Artificial Intelligence}, 2013.

\bibitem{BeameLRS14}
Paul Beame, Jerry Li, Sudeepa Roy, and Dan Suciu.
\newblock Counting of query expressions: Limitations of propositional methods.
\newblock In {\em Proc. 17th International Conference on Database Theory
  (ICDT)}, pages 177--188, 2014.

\bibitem{BraultCM15}
Johann Brault{-}Baron, Florent Capelli, and Stefan Mengel.
\newblock Understanding model counting for beta-acyclic {CNF}-formulas.
\newblock In {\em 32nd International Symposium on Theoretical Aspects of
  Computer Science}, pages 143--156, 2015.

\bibitem{CapelliDM14}
F.~Capelli, A.~Durand, and S.~Mengel.
\newblock {Hypergraph Acyclicity and Propositional Model Counting}.
\newblock In {\em Theory and Applications of Satisfiability Testing - {SAT}
  2014 - 17th International Conference}, pages 399--414, 2014.

\bibitem{Capelli17}
Florent Capelli.
\newblock Understanding the complexity of {\#sat} using knowledge compilation.
\newblock {\em Symposium on Logic in Computer Science (LICS)}, 2017.

\bibitem{Darwich01}
A.~Darwiche.
\newblock Decomposable negation normal form.
\newblock {\em J. {ACM}}, 48(4):608--647, 2001.

\bibitem{Darwiche01MC}
Adnan Darwiche.
\newblock On the tractable counting of theory models and its application to
  truth maintenance and belief revision.
\newblock {\em Journal of Applied Non-Classical Logics}, 11(1-2):11--34, 2001.

\bibitem{Diestel05}
Reinhard Diestel.
\newblock {\em Graph Theory, 4th Edition}, volume 173 of {\em Graduate texts in
  mathematics}.
\newblock Springer, 2012.

\bibitem{HuangD05}
Jinbo Huang and Adnan Darwiche.
\newblock {DPLL} with a trace: From {SAT} to knowledge compilation.
\newblock In {\em Proceedings of the Nineteenth International Joint Conference
  on Artificial Intelligence}, pages 156--162, 2005.

\bibitem{JhaS13}
Abhay~Kumar Jha and Dan Suciu.
\newblock Knowledge compilation meets database theory: Compiling queries to
  decision diagrams.
\newblock {\em Theory Comput. Syst.}, 52(3):403--440, 2013.

\bibitem{LagniezM17}
Jean{-}Marie Lagniez and Pierre Marquis.
\newblock An improved decision-dnnf compiler.
\newblock In {\em Proceedings of the Twenty-Sixth International Joint
  Conference on Artificial Intelligence, {IJCAI} 2017}, 2017.

\bibitem{OztokD14}
Umut Oztok and Adnan Darwiche.
\newblock {On Compiling {CNF} into Decision-DNNF}.
\newblock In {\em Principles and Practice of Constraint Programming - 20th
  International Conference, {CP} 2014}, pages 42--57, 2014.

\bibitem{PaulusmaSlivovskySzeider16}
Dani{\"e}l Paulusma, Friedrich Slivovsky, and Stefan Szeider.
\newblock {Model Counting for CNF Formulas of Bounded Modular Treewidth}.
\newblock {\em Algorithmica}, 76(1):168--194, 2016.

\bibitem{PipatsrisawatD08}
Knot Pipatsrisawat and Adnan Darwiche.
\newblock New compilation languages based on structured decomposability.
\newblock In {\em Proceedings of the Twenty-Third {AAAI} Conference on
  Artificial Intelligence, {AAAI}}, pages 517--522, 2008.

\bibitem{Razgon14}
Igor Razgon.
\newblock No small nondeterministic read-once branching programs for cnfs of
  bounded treewidth.
\newblock In {\em Parameterized and Exact Computation - 9th International
  Symposium, {IPEC}}, pages 319--331, 2014.

\bibitem{Razgon13}
Igor Razgon.
\newblock {On OBDDs for CNFs of Bounded Treewidth}.
\newblock In {\em Principles of Knowledge Representation and Reasoning:
  Proceedings of the Fourteenth International Conference}, 2014.

\bibitem{Roth96}
D.~Roth.
\newblock On the hardness of approximate reasoning.
\newblock {\em Artificial Intelligence}, 82(1–2):273 -- 302, 1996.

\bibitem{SaetherTV14}
S.~Hortemo S{\ae}ther, J.A. Telle, and M.~Vatshelle.
\newblock Solving {MaxSAT} and {{\#}SAT} on structured {CNF} formulas.
\newblock In {\em Theory and Applications of Satisfiability Testing}, pages
  16--31, 2014.

\bibitem{SamerS10}
M.~Samer and S.~Szeider.
\newblock {Algorithms for propositional model counting}.
\newblock {\em Journal of Discrete Algorithms}, 8(1):50--64, 2010.

\bibitem{sang04}
Tian Sang, Fahiem Bacchus, Paul Beame, Henry~A Kautz, and Toniann Pitassi.
\newblock Combining component caching and clause learning for effective model
  counting.
\newblock {\em Theory and Applications of Satisfiability Testing}, 4:7th, 2004.

\bibitem{SlivovskyS13}
F.~Slivovsky and S.~Szeider.
\newblock {Model Counting for Formulas of Bounded Clique-Width}.
\newblock In {\em Algorithms and Computation - 24th International Symposium,
  ISAAC}, pages 677--687, 2013.

\bibitem{Szeider04}
Stefan Szeider.
\newblock On fixed-parameter tractable parameterizations of {S}{A}{T}.
\newblock In Enrico Giunchiglia and Armando Tacchella, editors, {\em Theory and
  Applications of Satisfiability, 6th International Conference}, volume 2919 of
  {\em LNCS}, pages 188--202. Springer, 2004.

\bibitem{thurley06}
Marc Thurley.
\newblock sharpsat--counting models with advanced component caching and
  implicit bcp.
\newblock In {\em Theory and Applications of Satisfiability Testing}, pages
  424--429. Springer, 2006.

\end{thebibliography}

\end{document}